\def\qed{\hfill $\Box$}
\DeclarePairedDelimiter\norm{\lVert}{\rVert}
\def\ffn{FedNNNN}
\def\nwda{NWDA}
\def\bw{\bm{w}}
\def\dw{{\Delta \bw}}
\def\wk{\bw^k}
\begin{document}
\title{FedNNNN: Norm-Normalized Neural Network Aggregation for Fast and Accurate \\Federated Learning}
\titlerunning{FedNNNN: Fast and Accurate Federated Learning}
%
\author{Kenta Nagura\inst{1} \and
Song Bian\inst{1} \and
Takashi Sato\inst{1}}
\authorrunning{K. Nagura et al.}
%
\institute{Graduate School of Infomatics, Kyoto University, Kyoto, 
Japan \email{paper@easter.kuee.kyoto-u.ac.jp}}
\maketitle              
\begin{abstract}
Federated learning (FL) is a distributed learning protocol in which a server 
needs to aggregate a set of models learned some independent clients to proceed the
learning process.  At present, model averaging, known as FedAvg, is one of the most
widely adapted aggregation techniques.  However, it is known to yield the models with
degraded prediction accuracy and slow convergence.  In this work, we find out that
averaging models from different clients significantly diminishes the norm of the update
vectors, resulting in slow learning rate and low prediction accuracy.
Therefore, we propose a new aggregation method called FedNNNN. Instead of
simple model averaging, we adjust the norm of the update vector and introduce
momentum control techniques to improve the aggregation effectiveness of FL.
As a demonstration, we evaluate FedNNNN on multiple datasets and scenarios with different 
neural network models, and observe up to $5.4\%$ accuracy improvement.

\keywords{Federated Learning  \and Distributed Learning \and Deep Learning}
\end{abstract}

\section{Introduction}\label{sec:intro}
The use of neural networks (NN) in applications such as image classification, natural language processing 
and speech recognition have become one of the core infrastructures in modern lives.
As the number of devices in the age of internet of things (IoT) increases, a large amount 
of data can easily be collected to improve the accuracy of the NN models.
On the other hand, due to privacy and computation resource concerns,
federated learning (FL) ~\cite{bib:Brendan2016} is attracting major attentions 
recently. In contrast to conventional single-machine model training, the FL server 
aggregates models that are locally trained by clients, and can thus be considered as a type of 
distributed learning~\cite{bib:Ma2017,bib:Lin2015,bib:Jeffrey2012,bib:Chilimbi2014}.
As model aggregation involves much less computations and communication bandwidth on the server
compared to direct training, FL is a preferred strategy for the providers of the
machine learning services.

Unfortunately, most existing FL frameworks are either  unrealistic in their protocol
construction, or impractical in terms of their prediction accuracy.
In particular, the model averaging technique proposed by federated averaging 
(FedAvg)~\cite{bib:Brendan2016}, adopted in most FL frameworks, result 
in about 15\% accuracy degradation over non-IID (non independent and 
identically distributed) datasets~\cite{bib:Yue2018}. Notable improvements
over FedAvg include~\cite{bib:Yue2018,bib:Tian2018,bib:Moming2019,bib:Shoham2019}.
We defer a more detailed discussion of related works to Section~\ref{sec:relatedworks},
but point out that many existing works failed to improve the accuracy of FL across datasets.
Those techniques that are successful in addressing the accuracy
degradation problem generally rely on additional data 
sharing~\cite{bib:Yue2018,bib:Moming2019,bib:Shoham2019}, which defeats the
original purpose of FL in many practical applications.

In this paper, we propose {\ffn}, a norm-based neural network aggregation technique. 
In particular, we first propose norm-based weight divergence analysis (\nwda) to visualize
and explain how FL proceeds, particularly in the presence of the FedAvg algorithm. 
We then introduce {\ffn}, and show that we can improve the
prediction accuracy of FL up to 5\%. The main contributions of this work are summarized
as follows.
\begin{itemize}
  \item {\bf{{\nwda} and The Updating Direction Divergence Problem}}:
    We point out that local weight updates in different (diverging) directions on 
    the clients result in small update norms, and we call this the weight updating direction divergence problem (WUDD). 
    WUDD is identified as the most 
    important reason for the slow convergence speed and degraded accuracy performance
    for FedAvg and the related works that build on FedAvg.
  \item {\bf{Norm-Normalized Aggregation}}:
    We propose a normalization technique that targets
    on solving the WUDD problem. In the technique, we apply a simple 
    normalizing factor during model 
    aggregation on the server with an additional momentum term to force and accelerate
    the learning process over the communication rounds.
  \item {\bf{Improved Accuracy with Negligible Overheads}}:
    By conducting rigorous experiments with the proposed technique, we observe accuracy 
    improvements across datasets compared to the-state-of-the-art FL 
    techniques with extremely small computational overheads.
\end{itemize}

The rest of this paper is organized as follows.
In Section~\ref{sec:prelim}, we explain FL and FedAvg in detail, and discuss
some related works.
In Section~\ref{sec:analysis}, we introduce our norm-based FL analysis method {\nwda}, 
and in Section~\ref{sec:nnnn}, our norm-based aggregation method, FedNNNN is formulated.
In Section~\ref{sec:experiments}, our method is evaluated with various datasets experimentally,
and we conclude our work in Section~\ref{sec:conclusion}.

\section{Preliminaries and Related Works}\label{sec:prelim}
\subsection{Federated Learning}\label{sec:FL}
Federated Learning (FL)~\cite{bib:Brendan2016} is a distributed learning protocol 
that enables one to train a model with a massive amount of data obtained by IoT devices or 
smartphones without expensive training on a centralized server.
Instead of collecting the data and training the model on a single machine, 
FL server only combines models locally trained by edge devices to proceed the learning process, 
and this combination procedure is known as model aggregation.
The properties of FL is extensively studied over the past few 
years~\cite{bib:Brendan2016,bib:Jakub2016,bib:Robin2017,bib:Bonawitz2017,bib:Eugene2018,bib:Wang2018,bib:Yue2018,bib:Tian2018,bib:Nishio2019,bib:Felix2019,bib:Shoham2019,bib:Mohri2019},
where we see discussions on aspects of FL such as communication efficiency~\cite{bib:Jakub2016}, 
adaptation to heterogeneous systems~\cite{bib:Nishio2019}, performance
over non-standard data 
distributions~\cite{bib:Yue2018,bib:Tian2018,bib:Moming2019,bib:Felix2019,bib:Shoham2019,bib:Mohri2019}, 
security properties~\cite{bib:Robin2017,bib:Bonawitz2017,bib:Eugene2018}, and many more.
In this work, we focus on improving the prediction accuracy of FL over non-IID distributions,
which is one of the main problems associated with existing FL frameworks.

\subsection{Federated Averaging}
In this section, we outline the Federated Averaging (FedAvg)
framework~\cite{bib:Brendan2016}.
The overview is depicted by Figure~\ref{fig:fedavg}.
\begin{figure}[t]
    \begin{tabular}{cc}
      \begin{minipage}[t]{0.48\linewidth}
        \centering
        \includegraphics[width=\linewidth]{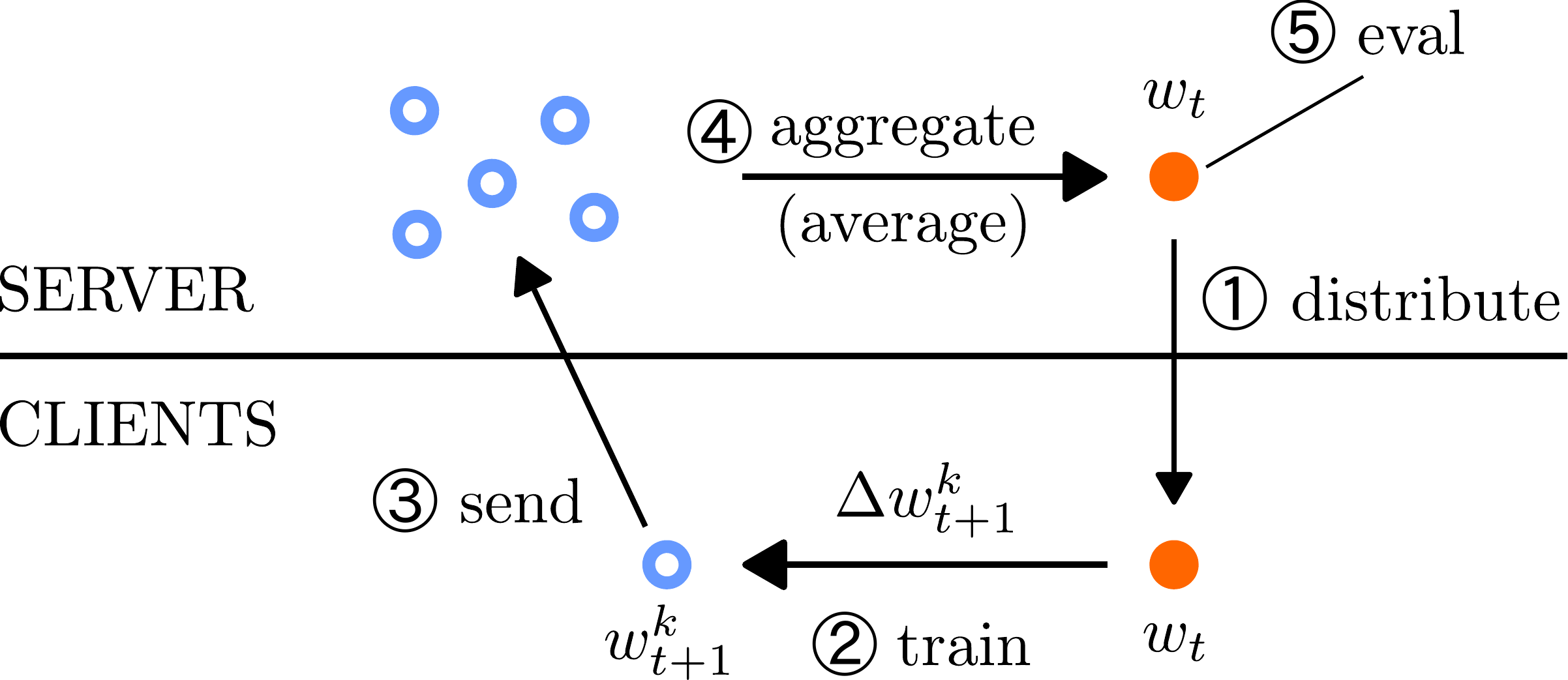}
        \caption{Conceptual illustration of the FedAvg protocol.}
        \label{fig:fedavg}
      \end{minipage} 
      \hspace{0.2cm}
      \begin{minipage}[t]{0.48\linewidth}
        \centering
        \includegraphics[width=0.55\linewidth]{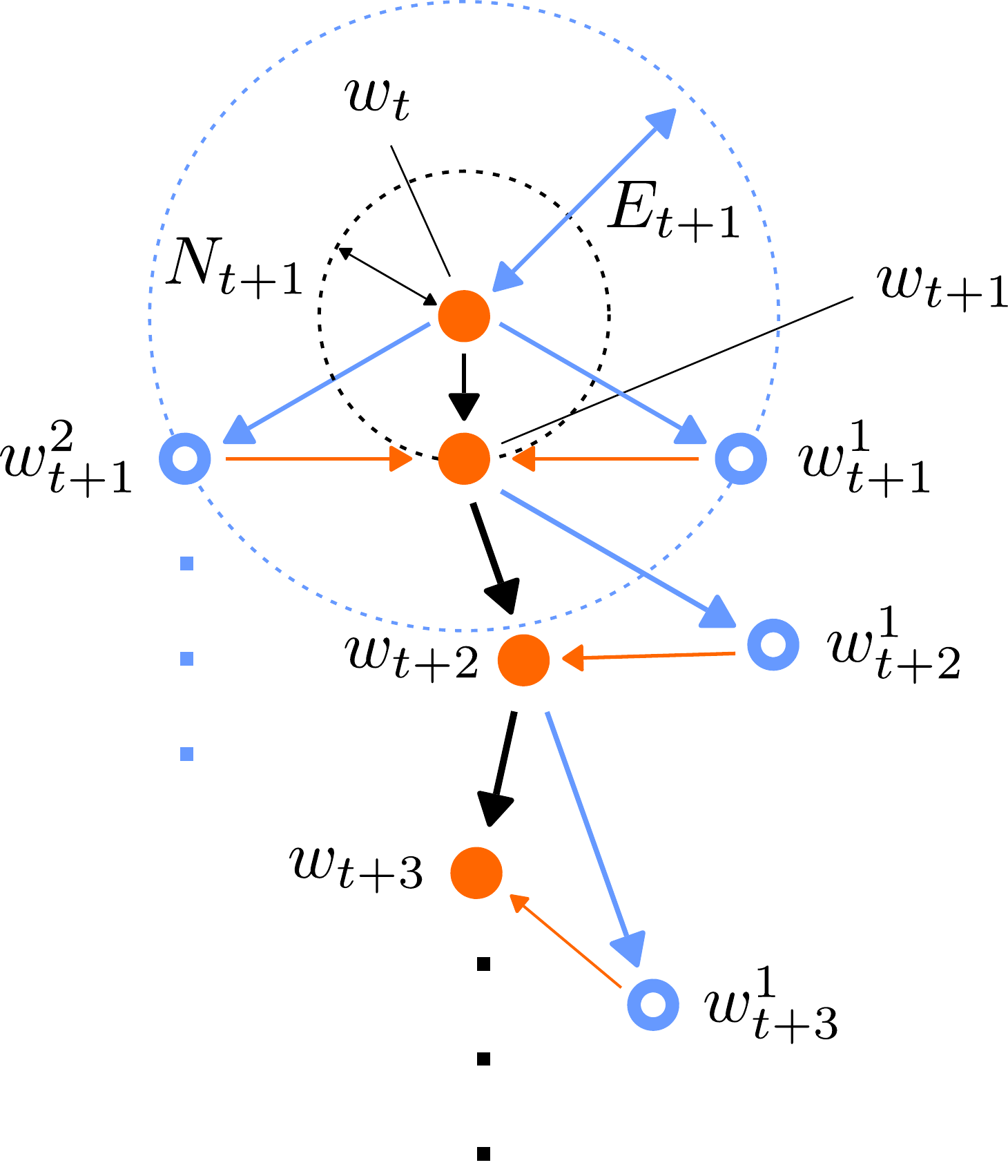}
        \vspace{0.4cm}
        \caption{The illustration of the weight updating direction divergence (WUDD) problem in FedAvg.}
        \label{fig:fedavg_move}       
      \end{minipage}
    \end{tabular}
\end{figure}
In FedAvg, we assume that clients are in possession of the data, 
and a server aggregates the models from the clients. The learning
is proceeded by the following steps.
\begin{itemize}
  \item Step {\large\ding{192}}: Let the total number of clients
  to be $K$. The server first picks $m=\max(C\cdot K, 1)$ clients out of the total $K$ clients
  for some real number $C\in [0, 1]$. In the first round, the server initialize a model
  $\bw_{0}$ and distribute the model to the selected $m$ clients.
  Otherwise, the server distributes the aggregated model $\bm w_t$. 
  We consider the model distribution to be the start of the $(t+1)$-th round
  of communication.
  \item Step {\large\ding{193}}: Upon receiving the server model, each client 
  locally trains the their own models using $\bm w_t$ as the initial model for $\mathcal{E}$ epochs.
  The local model trained on the $k$-th client in the $(t+1)$-th communication round
  is referred to as $\bw^{k}_{t+1}$.
  \item Step {\large\ding{194}}: Client $k$ returns its trained model $\bm w^k_{t+1}$ to
  the server.
  \item Step {\large\ding{195}}: Upon receiving $m$ locally trained models from
  the clients, the server aggregates the models by simply taking the weighted average as
  \begin{equation}
    \label{eq:fedavg}
    \bm w_{t+1}\leftarrow \sum_{k=1}^{m}\frac{n_k}{n}\bm w^k_{t+1},
\end{equation}
where $n_{k}$ is the size of the dataset on the $k$-th client, and we have 
that $n=\sum_{k=1}^m n_k$.
  \item Step {\large\ding{196}}: The server evaluates a unified model with test data.
     Additional learning steps can be carried out by repeating the steps from {\large\ding{192}} to
     {\large\ding{196}}.
\end{itemize}

\subsection{Improvements on FedAvg}\label{sec:relatedworks}
As mentioned, the averaging aggregation utilized in FedAvg results in significant
accuracy degradation during server model evaluation if the clients possess non-IID 
datasets. A line of 
works~\cite{bib:Yue2018,bib:Tian2018,bib:Moming2019,bib:Felix2019,bib:Shoham2019,bib:Mohri2019} 
are proposed to address this problem. Here we give a brief review on the existing methods.

The work in~\cite{bib:Yue2018} is one of the first to point out that non-IID datasets
result in drastically different local models, and these models become difficult to
aggregate with simple averaging. However,~\cite{bib:Yue2018} only proposes to share
auxiliary datasets to each client so that each client obtains a more IID dataset.
\cite{bib:Tian2018} tries to adjust the loss function to improve upon~\cite{bib:Yue2018}, but our analysis
shows that the improvements are not consistent across datasets. In 
Section~\ref{sec:nwda_related}, we take a deeper look at the exact reason for the
degraded accuracy of FL over non-IID datasets.

Other optimization approaches include~\cite{bib:Moming2019,bib:Felix2019,bib:Shoham2019}. In~\cite{bib:Moming2019}, clients are grouped based on the label
distribution of their datasets. 
In~\cite{bib:Felix2019},
the server aggregates the client models whenever a single weight update occurs on some
particular client. 
These approaches clearly incur a large amount of communications 
between the server and the clients. 
FedCurv~\cite{bib:Shoham2019}, which imposes complex loss function on clients, also
induces communication overheads.
In summary, existing works generally require additional datasets or complex communication protocols
to directly solve the non-IID problem on the data level. In what follows, we introduce quantitative 
analyses and normalization-based techniques to mitigate the impact of non-IID datasets without
relying on auxiliary datasets or complex protocol modification.

\section{Aggregating Divergent Weights in Federated Averaging} \label{sec:analysis}
We propose a norm-based weight divergence analysis ({\nwda}) technique to 
visualize how learning proceeds in FedAvg-based aggregation techniques over 
non-IID datasets in this Section. 

\subsection{Norm-based Weight Divergence Analysis}\label{sec:nwda}
We start by defining a per-client $\dw^{k}_{t+1}$ for the $(t+1)$-th communication round
\begin{equation}
\label{eq:diff}
\Delta \bm w_{t+1}^k = \bm w^k_{t+1} - \bm w_t
\end{equation}
where $\bw_{t}$ is the weight vector (i.e., neural network model) distributed to each 
client in the start of the $(t+1)$-th communication round, and $\bw^k_{t+1}$ is the locally learned
weights that will be returned to the server in the $t+1$-th round. Hence, we can
simply interpret the weight difference $\dw^{k}_{t+1}$ as the amount of learning 
proceeded in a single round of communication for client $k$.
 
Using Equation~\eqref{eq:diff}, we can re-formulate Equation~\eqref{eq:fedavg} the aggregation 
procedure in FedAvg as
\begin{equation}\label{eq:norm_form}
    \bm w_{t+1}\leftarrow \bm w_t + \sum_{k=1}^m \frac{n_k}{n}\Delta \bm w^k_{t+1}.
\end{equation}
In other words, since all clients share the same $\bw_{t}$, we can
express the weight-averaging procedure
in FedAvg as the sum of the distributed model $\bw_{t}$ and the averaged sum of 
the local updates from each of the clients.

To quantitatively assess the impact of the updating vector, we define a pair
of real scalars $N_{t+1}, E_{t+1}\in\mathbb{R}$ using the $L_{2}$ norm $\norm*{.}$ as follows
\begin{eqnarray}
    N_{t+1}&:=\norm*{\sum_{k=1}^m \frac{n_k}{n}\Delta \bm w^k_{t+1}}&\label{eq:norm}\\
    E_{t+1}&:= \sum_{k=1}^m \frac{n_k}{n}\norm*{\Delta \bm w^k_{t+1}}.&\label{eq:enorm}
\end{eqnarray}
Here, $N_{t+1}$ in Equation~\eqref{eq:norm} is the distance
that the server model $\bw_{t}$ moved from the $t$-th round to 
the $(t+1)$-th round. Whereas, $E_{t+1}$ in Equation~\eqref{eq:enorm} is the 
average of the norms of each local weight updating vector $\norm{\dw^{k}_{t+1}}$ 
on the $k$-th client. Consequently, we can think of $N$ as the amount of 
server model updates, and $E$ as the (average) amount of local updates in clients.

The following proposition expresses the relationship between $N$ and $E$.

\begin{proposition}
\label{prop:inequality}
The following inequality holds
\begin{equation}
    N_{t+1}\leq E_{t+1}
\end{equation}
\text{for all $t\in (0, 1, \cdots)$}
\end{proposition}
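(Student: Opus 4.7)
The statement is essentially the triangle inequality for a convex combination of vectors in a normed space, so the plan is to invoke two standard properties of the $L_2$ norm in sequence: the triangle inequality and positive homogeneity. The main content is noticing that the coefficients $n_k/n$ are nonnegative (since they are ratios of dataset sizes) and sum to $1$, which is exactly the shape required to apply these properties cleanly.

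First, I would fix an arbitrary $t \geq 0$ and start from the definition of $N_{t+1}$. Applying the triangle inequality to the sum inside the norm gives
\begin{equation*}
    N_{t+1} = \norm*{\sum_{k=1}^m \frac{n_k}{n}\Delta \bm w^k_{t+1}} \leq \sum_{k=1}^m \norm*{\frac{n_k}{n}\Delta \bm w^k_{t+1}}.
\end{equation*}
Next, I would use the positive homogeneity of the norm together with the fact that $n_k/n \geq 0$ to pull each scalar coefficient outside, obtaining
\begin{equation*}
    \sum_{k=1}^m \norm*{\frac{n_k}{n}\Delta \bm w^k_{t+1}} = \sum_{k=1}^m \frac{n_k}{n}\norm*{\Delta \bm w^k_{t+1}} = E_{t+1}.
\end{equation*}
Chaining these two inequalities yields $N_{t+1} \leq E_{t+1}$, and since $t$ was arbitrary the claim holds for all $t \in \{0, 1, \dots\}$.

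There is no real obstacle here, since both tools used are elementary facts about norms. The only thing worth remarking on in the write-up is the interpretation: equality holds precisely when all the local update vectors $\Delta \bm w^k_{t+1}$ point in the same direction, so any strict gap $N_{t+1} < E_{t+1}$ quantifies exactly the WUDD phenomenon that the paper highlights, namely that disagreeing update directions on the clients cause the aggregated server update to be shorter than the average local update. This observation motivates the normalization scheme introduced later and is worth stating as a short remark after the two-line proof.
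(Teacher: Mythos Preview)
Your proof is correct and uses essentially the same idea as the paper: both arguments reduce to the triangle inequality together with positive homogeneity of the $L_2$ norm. The paper packages this as a separate lemma proved by induction on $m$ (splitting off one summand at a time), whereas you invoke the triangle inequality for a finite sum directly; these are the same argument at different levels of unpacking, and your explicit remark that $n_k/n\geq 0$ is needed for pulling the scalar outside the norm is in fact a detail the paper glosses over.
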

As Proposition~\ref{prop:inequality} follows trivially from a recursive application
of the Pythagorean inequality, we leave a formal proof to the appendix. The main
idea behind the proposition is that, while each client locally proceeds the learning
process by an average of $E_{t+1}$, after model aggregation, the server only learns
by $N_{t+1}$, which is guaranteed to be less than $E_{t+1}$ by 
Proposition~\ref{prop:inequality}.

The previously described learning behavior can be better illustrated through 
Figure~\ref{fig:fedavg_move}. The important observation here is that, since 
each client learns locally without online communication, their updating direction
diverges. As the server averages these local models in FedAvg, by the
formulation of Equation~\eqref{eq:norm_form}, the local models tend to cancel 
each other out, resulting in a extremely small updating vector on the server
(i.e., $N\ll E$). We refer to this phenomenon as the weight updating direction 
divergence (WUDD) problem.
\begin{figure}[t]
    \begin{tabular}{cc}
      \begin{minipage}[t]{0.48\linewidth}
        \centering
        \includegraphics[width=\linewidth]{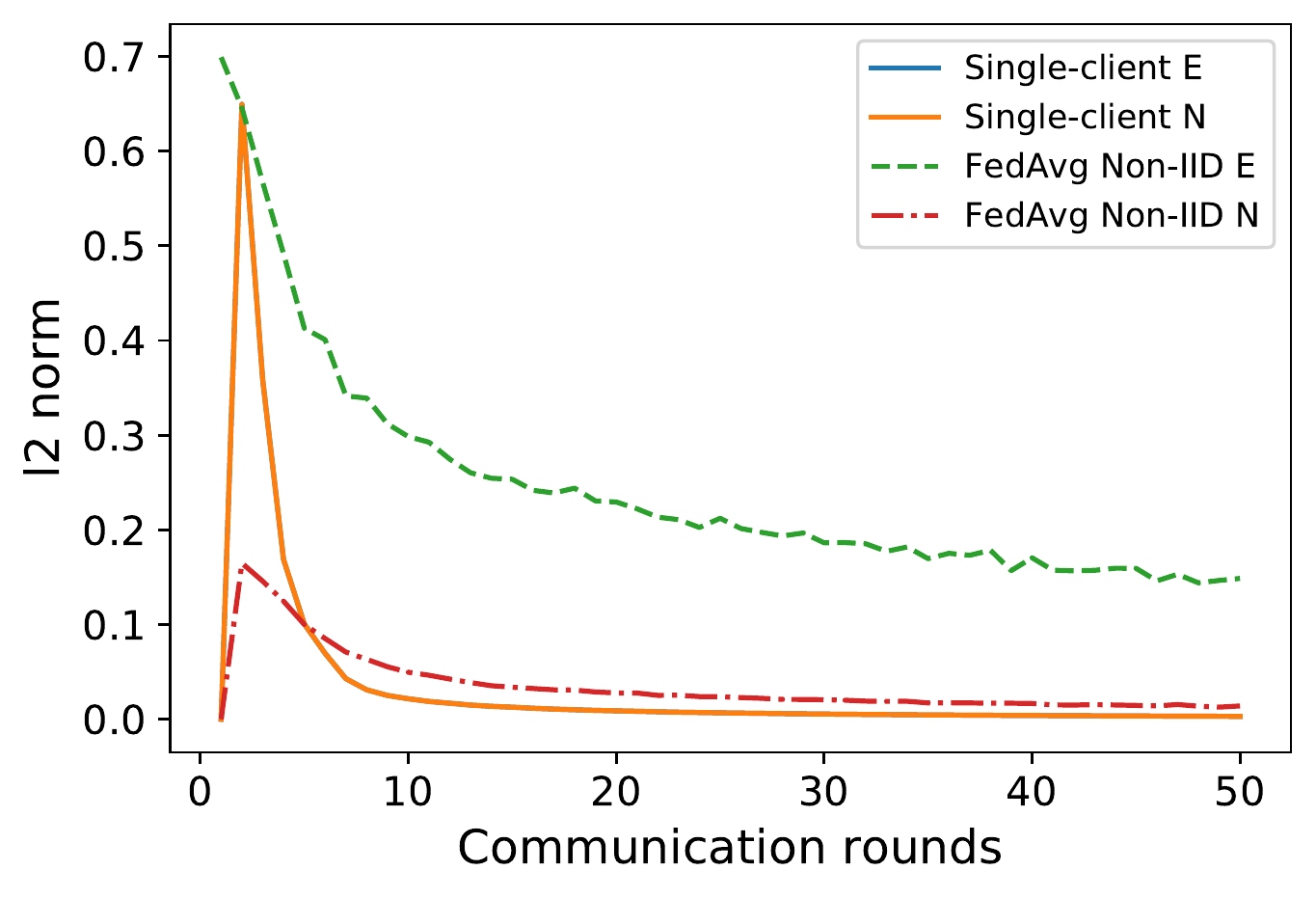}
        \caption{The change in the size of $N$ and $E$ over the communication rounds $t$.}
        \label{fig:L2full} 
      \end{minipage} 
      \hspace{0.2cm}
      \begin{minipage}[t]{0.48\linewidth}
        \centering
        \includegraphics[width=\linewidth]{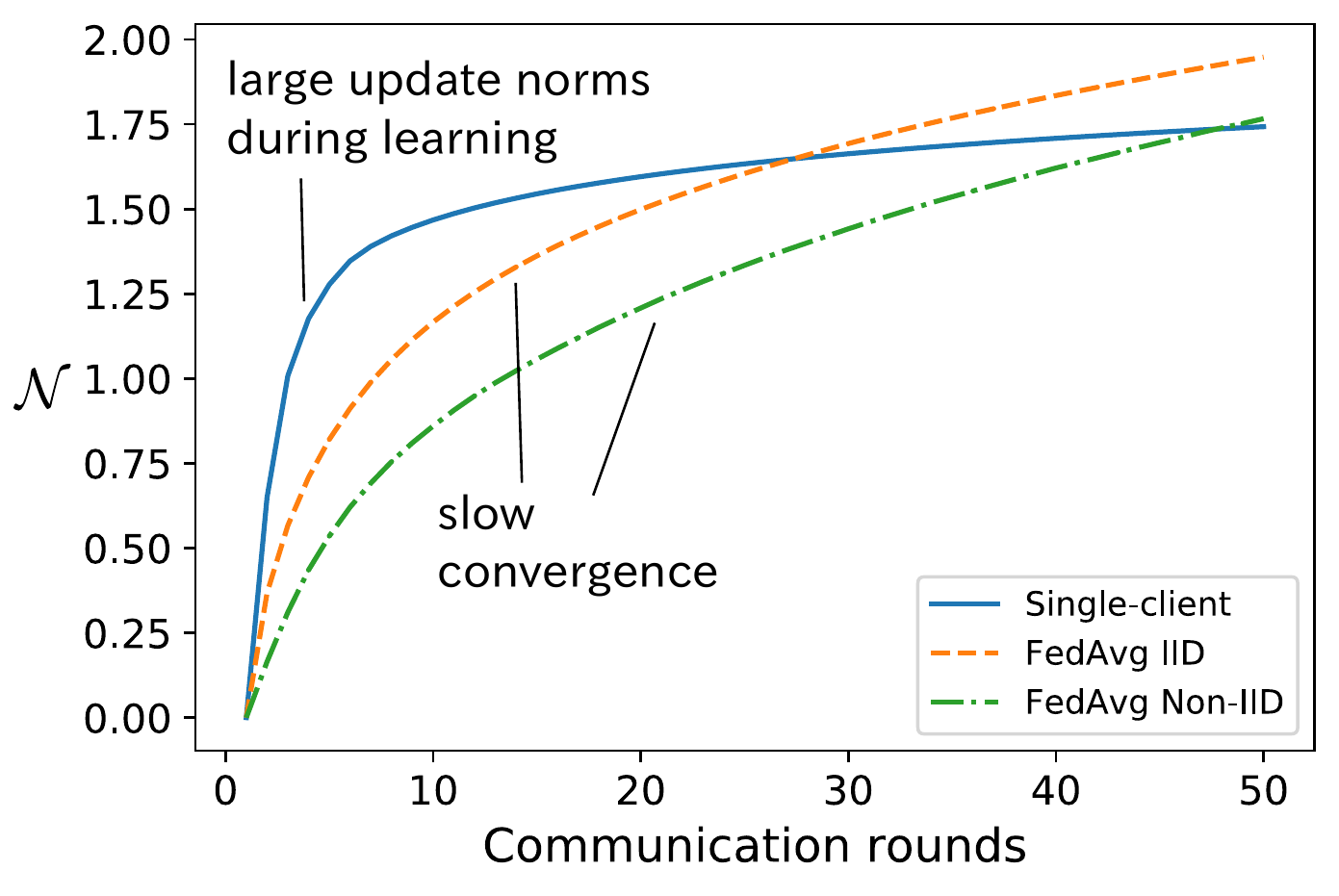}
        \caption{The integrated norm $\mathcal{N}$ of average weight difference 
        vectors $N$ over the communication rounds $t$.}
        \label{fig:L2full_acc} 
      \end{minipage}
    \end{tabular}
\end{figure}


As a demonstration of the {\nwda} technique, we show the calculated $N$ and $E$ 
values using MNIST dataset. 
Figure~\ref{fig:L2full} show how the sizes of $N$ and $E$ change over
the communication rounds (i.e., the learning epochs). Initially, the
NN tries to learn the dataset through a series of weight updates, and
we observe reasonably large updating norms. Since MNIST is a small dataset,
on the single-client case, the learning soon converges, and both $N$ and
$E$ become extremely small due to the learning convergence (note that
$N=E$ for the single-client case). In contrast, while $E$ is large for
non-IID FedAvg, due to the WUDD problem, $N$ is relatively small throughout
the communication rounds. We also emphasize on the fact that, as the
learning proceeds, $N$ becomes convergent (extremely small $N$) while $E$
remains large. This indicates that some learning updates are still available,
but cannot be learned (or aggregated) by the server through the simple 
averaging technique in FedAvg.

To further study the norm properties of single-client, IID and non-IID
FedAvg, we plot $\mathcal{N}$, the integral of the amount of norm updates
on the server with respect to $t$, i.e., 
$\mathcal{N}=\int_{\tau=0}^{t}\norm*{\bm w_{\tau+1} - \bm w_{\tau}}$ 
in Figure~\ref{fig:L2full_acc}. 
The single-client case shows a clear
trend of fast and convergent learning curve, while both IID and 
non-IID FedAvg do not show clear signs of convergence across the 
communication rounds.

\subsection{Applying {\nwda} in Existing Works}\label{sec:nwda_related}
In Section~\ref{sec:nwda}, we analyzed the learning process of 
FL under FedAvg  as a series of aggregations of vectors representing 
weight updates $\dw^k_{t+1}=\wk_{t+1}-\bw_t$. 
Based on our analysis, we can say that the average of the weights across clients
(i.e., $N_{t}$) obtains a reasonable large norm only when all clients 
update in the same direction. If the weight updates diverge (i.e., the WUDD problem),
learning cannot be executed effectively and efficiently in a simple FedAvg
setting.

While no existing works explicitly 
derive a norm analysis, some techniques~\cite{bib:Yue2018,bib:Moming2019,bib:Felix2019,bib:Shoham2019} 
try to implicitly address the WUDD problem.
For example, as mentioned Section~\ref{sec:relatedworks},
it is shown that if additional datasets (with different class labels) 
are distributed to the clients along to be learned along with the local datasets
on the client, the accuracy improves~\cite{bib:Yue2018}. With {\nwda}, we can
simply interpret this method as reducing the divergence between the updating
weight vectors across independent clients by introducing common datasets.
Based on the observation from~\cite{bib:Yue2018}, in~\cite{bib:Tian2018}, FedProx 
is proposed to reduce the difference between client models. 
FedProx proposes to add a normalization term $\frac{\mu}{2}\norm*{\bm w^k-\bm w_t}^2$
in the loss function on each client as
\begin{equation}
\label{eq:fedprox}
\min_{\bm w^k\in \mathbb{R}^d} Q(\bm w^k) + \frac{\mu}{2}\norm*{\bm w^k-\bm w_t}^2
\end{equation}
where $Q$ is the original loss, and $\mu$ is a hyperparameter. The basic idea of FedProx
is to penalize weight updates that largely increase the distance (i.e., norm) between
the locally learned model $\bm w^{k}$ and the distributed model $\bm w_{t}$. 
Essentially, by 
introducing a penalty term in the loss function, FedProx only reduces the size of 
$E$, and the diverging direction problem is not addressed.
However, $N$ is what actually matters to the evaluation accuracy of the
aggregated model. As long as $N$ is much less than the required amount of 
weight updates to fully perceive a dataset, the aggregated model cannot obtain
a reasonable level of accuracy.
\section{Norm-Normalized Neural Network Aggregation}\label{sec:nnnn}
In this section, we formally present the proposed norm-based aggregation (FedNNNN) technique
that improves the prediction accuracy of FL without incurring large computational
or communication overheads. We first introduce the slightly modified FL protocol
adopted in {\ffn}, and then demonstrate that {\ffn} is more efficient and 
effective in improving the accuracy of FL, especially over non-IID datasets.

\subsection{FedNNNN: The Protocol}
\begin{figure}[t]
    \begin{tabular}{cc}
      \begin{minipage}[t]{0.48\linewidth}
        \centering
        \includegraphics[width=\linewidth]{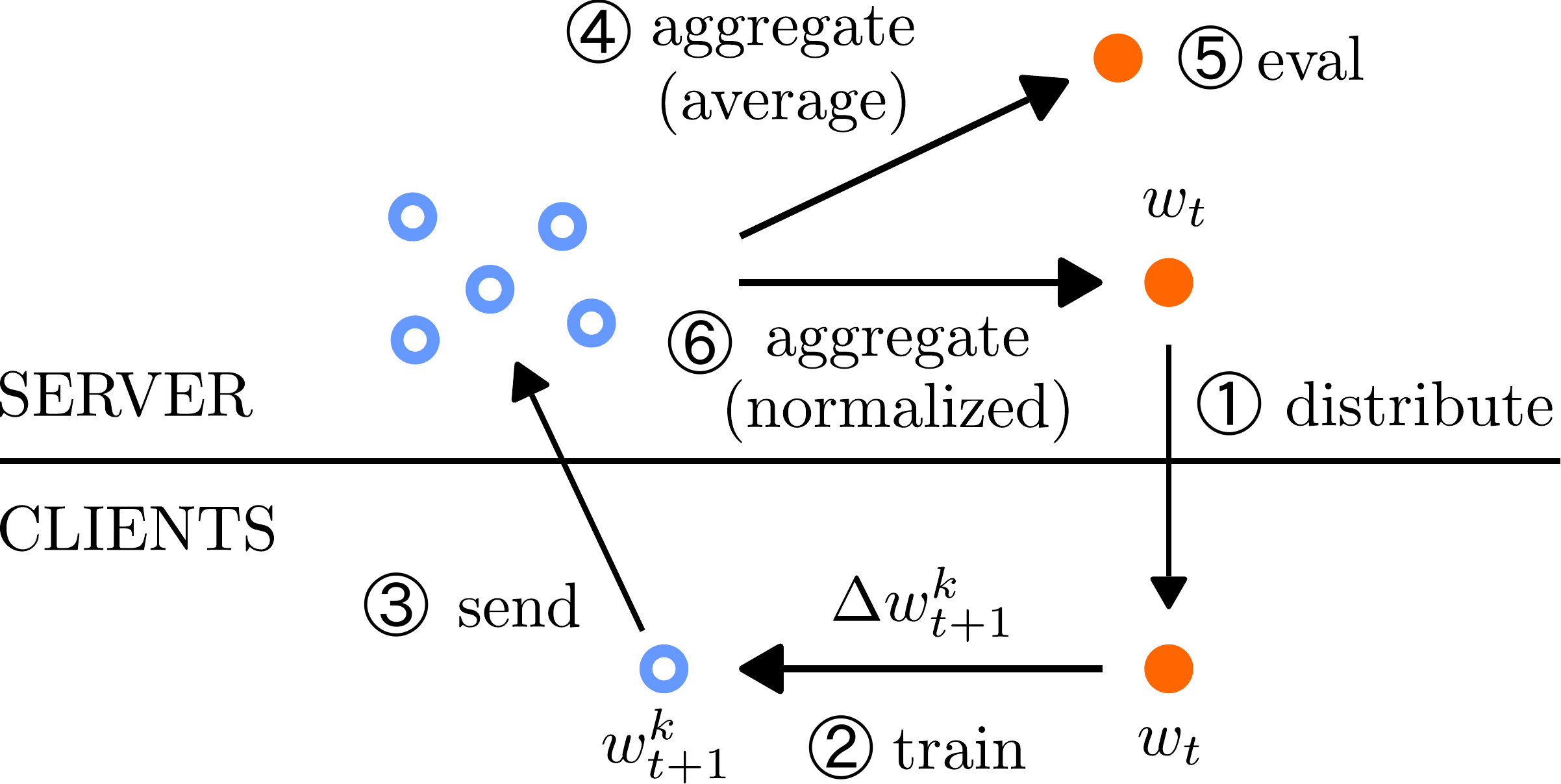}
        \caption{The proposed federated learning protocol for adopting our norm-normalized neural network aggregation technique.}
        \label{fig:fedproposed}
      \end{minipage} 
      \hspace{0.2cm}
      \begin{minipage}[t]{0.48\linewidth}
        \centering
        \includegraphics[width=\linewidth]{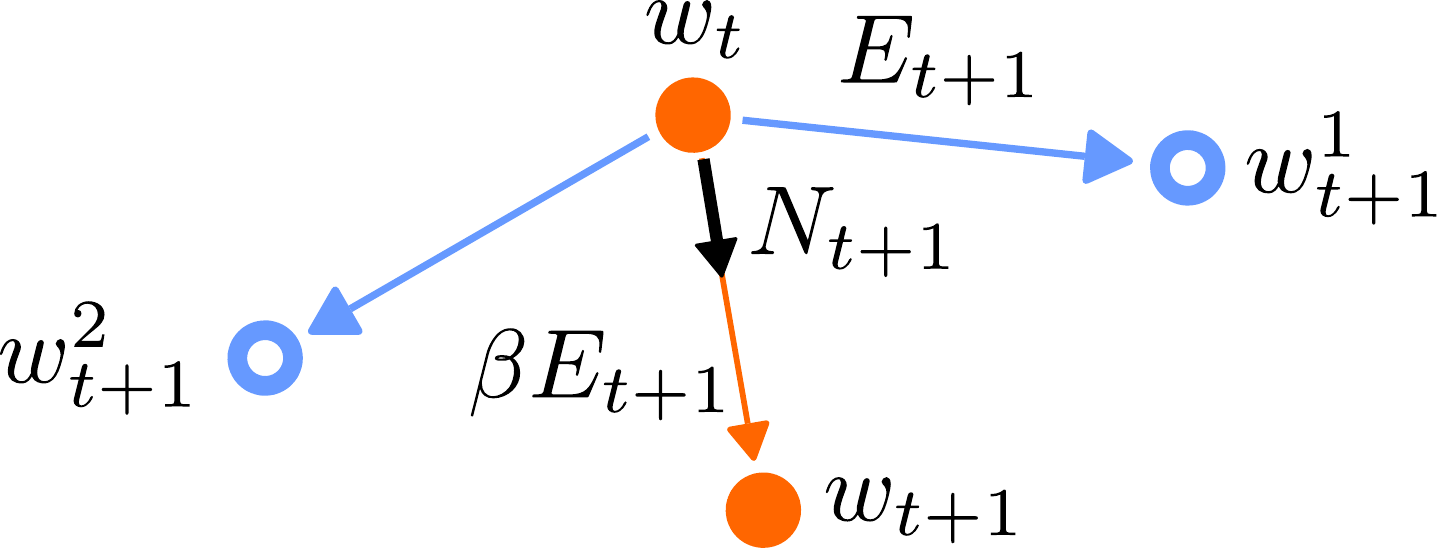}
        \caption{A conceptual illustration of the normalized aggregation technique.}
        \label{fig:fednorm}
      \end{minipage}
    \end{tabular}
\end{figure}

The main difference between the {\ffn} protocol and the conventional FL
protocol is that, we separate the evaluation model and the aggregated model.
As shown in Figure~\ref{fig:fedproposed}, after the clients locally produce
their models, there are two aggregation steps.
\begin{itemize}
  \item {\large\ding{195}} Average Aggregation: The same aggregation in FedAvg is
  adopted to produce a single model that is capable of performing 
  highly-accurate inference based on the aggregated models.
  \item {\large\ding{197}} Normalized Aggregation: The proposed normalized aggregation
  technique is used to resolve the WUDD problem where $N\ll E$. 
  More details on the complete {\ffn} aggregation is provided
  in Section~\ref{sec:distagg}.
\end{itemize}
All other steps remain the same as in the conventional FL protocol. 
Lastly, we note that the proposed normalized aggregation technique 
is only good as a foundation of learning. In fact, due to our modification
on the norm of the weights (as seen in the next section), prediction accuracy
on the aggregated model from step {\large\ding{197}} remains extremely poor.

\subsection{FedNNNN: The Aggregation}\label{sec:distagg}


As discussed in Section~\ref{sec:nwda_related}, according to our {\nwda}, 
the main reason that existing FL schemes fall short on non-IID datasets is 
that averaging models with divergent directions significantly reduces the 
updating norm of the aggregated model. Hence, in {\ffn}, we propose to normalize
the updating norm as
\begin{equation}
    \label{eq:fednorm}
     \bm w_{t+1} \leftarrow \bm w_t + \beta \frac{E_{t+1}}{N_{t+1}}\sum_{k=1}^m\frac{n_k}{n}\Delta \bm w_{t+1}^k .
\end{equation}
Consequently, the norm of the normalized updating weight vector becomes
\begin{eqnarray}
    \label{eq:n_fednorm}
    \norm*{\bm w_{t+1}-\bm w_t}&=&\norm*{\beta \frac{E_{t+1}}{N_{t+1}}\sum_{k=1}^m\frac{n_k}{n}\Delta \bm w_{t+1}^k}
        =\beta \frac{E_{t+1}}{N_{t+1}}\cdot N_{t+1}=\beta E_{t+1}.
\end{eqnarray} 
In order to control the impact of the norm normalization, we introduce a
hyperparameter $\beta$.
In the experiment, we empirically decide on the value of $\beta$.

A conceptual illustration of our normalized aggregation 
is depicted in Figure~\ref{fig:fednorm}. We assume that the 
weighted average $\sum_{k=1}^m\frac{n_k}{n}\Delta \bm w_{t+1}^k$ gives us a 
reasonably correct updating direction, but with
insufficient norm. We amplify the norm of the updating vector in that particular
direction with the size of $E_{t+1}$. The reason that $E_{t+1}$ works as a proper
normalization factor stems from our observation in Figure~\ref{fig:fedavg_move}, 
where we see that the size of $E$ remains large after $N$ converges in the non-IID
case. As a result, the amount of unfinished learning can be expressed through the size
of $E$, and if $E$ approaches to zero, we can safely conclude that no significant client
update is available anymore. Note that when $N$ is extremely small, e.g., 
0 or close to 0, we do not normalize the updating vector and return $\bw_{t}$
as is.

While the proposed normalized aggregation technique described above is successful in 
advancing the learning process in a given direction, we still find that, due to
its democratic nature, FL is susceptible to local extrema. To avoid the weights being 
stuck in a 
low-accuracy state, we add a momentum term to the aggregation process that is reminiscent 
to the momentum stochastic gradient descent (SGD) method~\cite{bib:Qian1999}
\begin{eqnarray}
    \label{eq:momagg1}
    \bm d_{t+1}&\leftarrow& \gamma \bm d_t + \sum_{k=1}^m\frac{n_k}{n}\Delta \bm w_{t+1}^k\\
    \label{eq:momagg2}
    \bm w_{t+1}&\leftarrow& \bm w_t + \bm d_{t+1}.
\end{eqnarray}
Here, $\gamma$ is the same hyperparameter as in momentum SGD that expresses the amount
of past weight changes to be memorized. Equation~\eqref{eq:momagg1} defines
the $(t+1)$-th round momentum $\bm d_{t+1}$ to be a scaled version of the average weight 
updates (note $\bm d_{0}=0$), and these weight updates accumulate as a driving force
to prevent the learning from being stuck in non-optimized states. Hence, the
aggregation formula of our normalized aggregation with momentum adjustment
becomes
\begin{eqnarray}
    \label{eq:fed4n_mom}
    \bm d_{t+1}&\leftarrow& \gamma \bm d_t + \beta \frac{E_{t+1}}{N_{t+1}}\sum_{k=1}^m\frac{n_k}{n}\Delta \bm w_{t+1}^k\\
    \label{eq:fed4n_update}
    \bm w_{t+1}&\leftarrow& \bm w_t + \bm d_{t+1},
\end{eqnarray}
and we leave the complete description of the entire protocol to the appendix.
\section{Experiments}\label{sec:experiments}
In this section, we present the effectiveness of our proposed FedNNNN
through the experiments on 2 image datasets, MNIST and CIFAR10~\cite{bib:CIFAR} and
4 different FL settings for data distributions.
According to the size of each dataset, different CNN models have been implemented using the PyTorch framework.
In this work, we compare our technique to FedAvg and FedProx, as they have 
similar computation and communication characteristics to {\ffn}.
\subsection{Settings}
We use the MNIST dataset that contains 10 classes of grey-scale handwritten images of numbers,
and the CIFAR10 dataset which classifies RGB images into 10 classes. 
The parameters used in the following experiments are summarized in 
Table~\ref{tab:params}, where the notations are
commonly used in most FL methods.
Total Rounds represents the total number of communications between the server and the clients. 
$\eta$ and $\lambda$ are learning rate and weight decay parameters, respectively.
We used SGD without momentum for optimization.
Due to the limited space, the exact valuations of $\mu$, $\beta$ and $\gamma$ which used in each FL method 
will be listed in the appendix.
We split the dataset into training and test data, where the training data 
are assumed to be inherently held by the clients 
and test data are evaluated at server side.
The hyperparameters $\eta$, $\lambda$, $\mu$, $\beta$, $\gamma$ were chosen to have the
highest accuracy on the test data over the training rounds.

Here, we outline the architecture of CNN models used in this section.
For MNIST, we used a CNN architecture with 2 convolutional layers 
followed by 2 fully connected layers.
For CIFAR10, a CNN with 3 fully connected layers following 6 
convolutional layers with batch normalization layer is used.
Convolutional and fully connected layers of both the MNIST and the 
CIFAR10 architectures have bias terms.
In Table~\ref{tab:params}, BN indicates whether batch normalization is used.
As a separate preprocessing, 
we normalized input images by subtracting the mean and dividing by 
the standard deviation.

\begin{table}[t]
\footnotesize
\caption{Summary of Hyperparameters Used in the Experiment (Common in Scenarios)}
\label{tab:params}
\centering
\begin{tabular}{lcccccccc}
Parameters & Total Rounds & $C$ & $K$   & $B$  & $\mathcal{E}$ & $\eta$ &$\lambda$& BN \\ \hline\hline
MNIST      & 100          & 1 & 100 & 50 & 5 & 0.05 &0      & no            \\ \hline
CIFAR10    & 250          & 1 & 100 & 50 & 5 & 0.05 &$5\times10^{-4}$& yes  \\ \hline
\end{tabular}
\end{table}

\begin{table}[]
\footnotesize
\caption{Summary of Accuracy Results}
\label{tab:accuracy}
\begin{tabular}{lcccccccc}
\multirow{2}{*}{} & \multicolumn{2}{c}{IID - B} & \multicolumn{2}{c}{non-IID - B} & \multicolumn{2}{c}{IID - UB} & \multicolumn{2}{c}{non-IID - UB} \\ 
                  & MNIST           & CIFAR10       & MNIST         & CIFAR10       & MNIST         & CIFAR10       & MNIST         & CIFAR10       \\ \hline\hline
FedAvg            & 99.0            & 81.3          & 98.2          & 72.6          & 99.1          & 83.5          & 93.5          & 56.6          \\ \hline
FedProx           & 99.0            & 81.4          & 98.1          & 74.2          & 99.1          & 83.2          & 87.9          & 55.6          \\ \hline
Norm-Norm         & 99.1            & 81.2          & 99.0          & 76.7          & 99.2          & 83.3          & 97.1          & 55.5          \\ \hline
Momentum          & 99.1            & 83.1          & \textbf{99.2} & 74.3          & 99.1          & \textbf{84.2} & 96.9          & \textbf{62.0} \\ \hline
FedNNNN           & \textbf{99.2}   & \textbf{84.7} & 99.1          & \textbf{76.7} & \textbf{99.2} & 84.0          & \textbf{98.9} & 61.9          \\ \hline
\end{tabular}
\end{table}


\begin{figure}[t]
    \begin{tabular}{cc}
        \begin{minipage}[t]{0.32\linewidth}
        \centering
        \includegraphics[width=\linewidth]{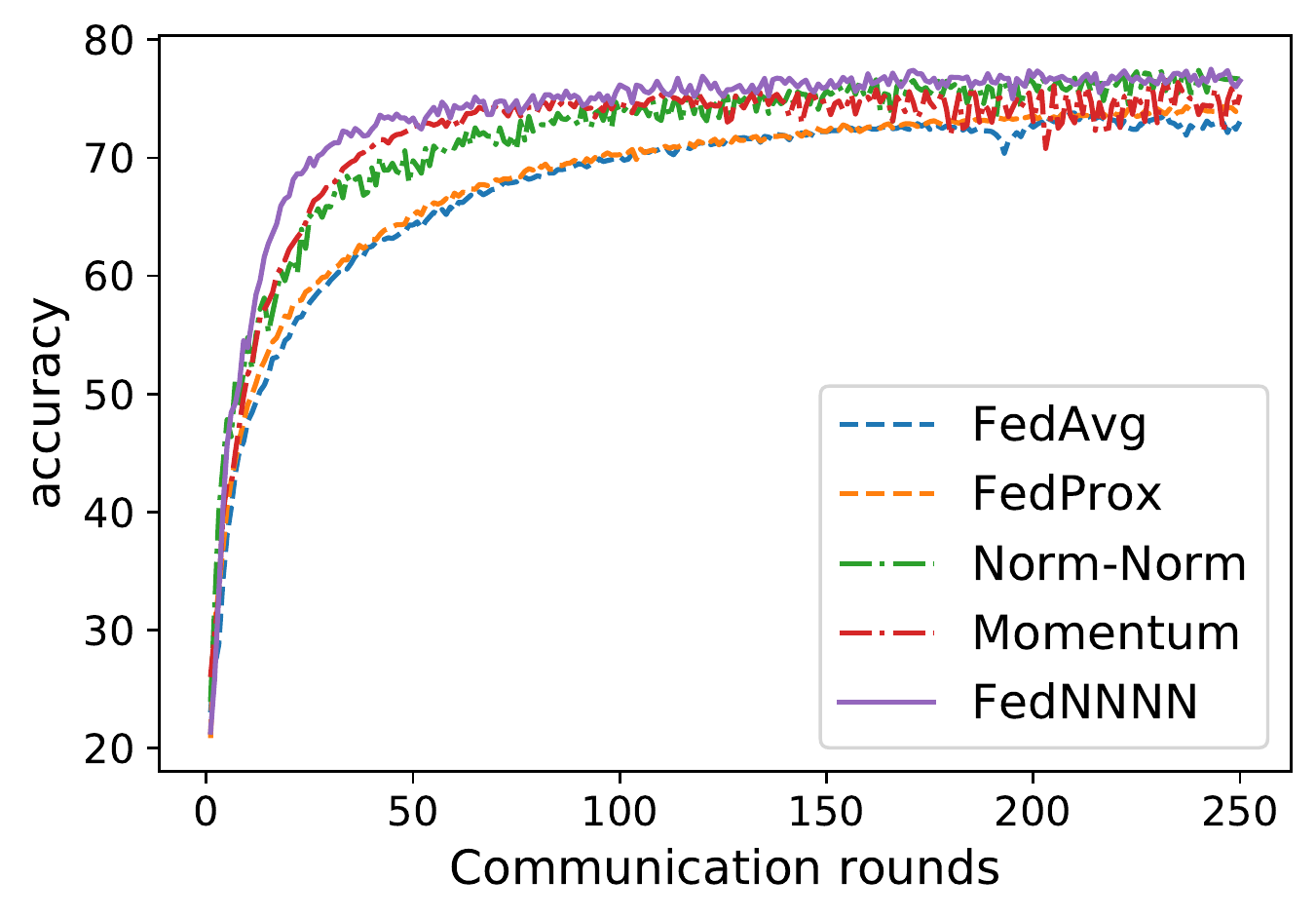}
        \caption{Accuracy curves on the CIFAR10 dataset under non-IID - B condition.}
        \label{fig:cifar10_acc}       
      \end{minipage}
      \hspace{0.1cm}
      \begin{minipage}[t]{0.32\linewidth}
        \centering
        \includegraphics[width=\linewidth]{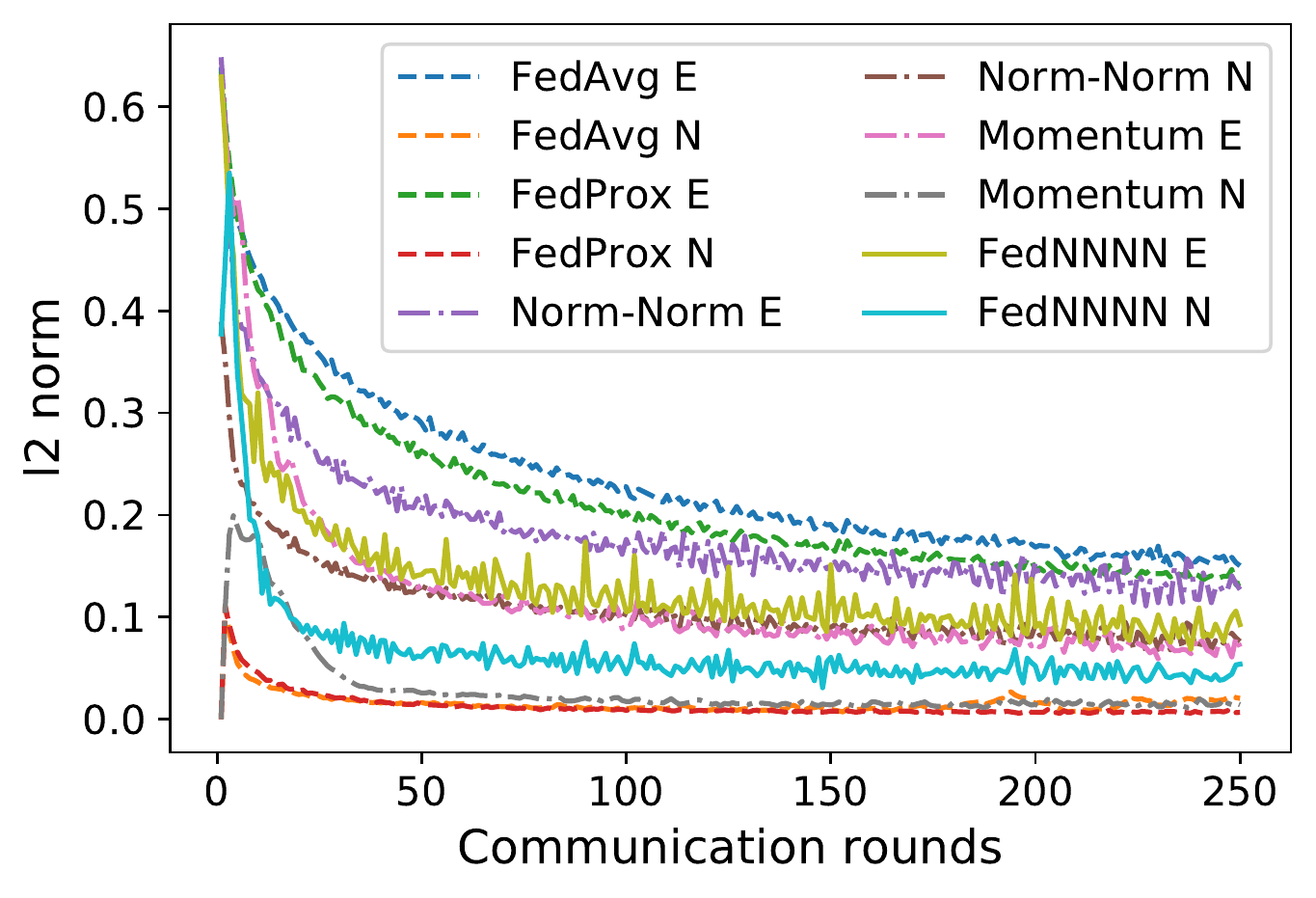}
        \caption{The $N$ and $E$ of fully connected layer calculated on the CIFAR10 dataset
        under non-IID - B condition.}
        \label{fig:cifar10_norm_fc}       
      \end{minipage} 
      \hspace{0.1cm}
      \begin{minipage}[t]{0.32\linewidth}
        \centering
        \includegraphics[width=\linewidth]{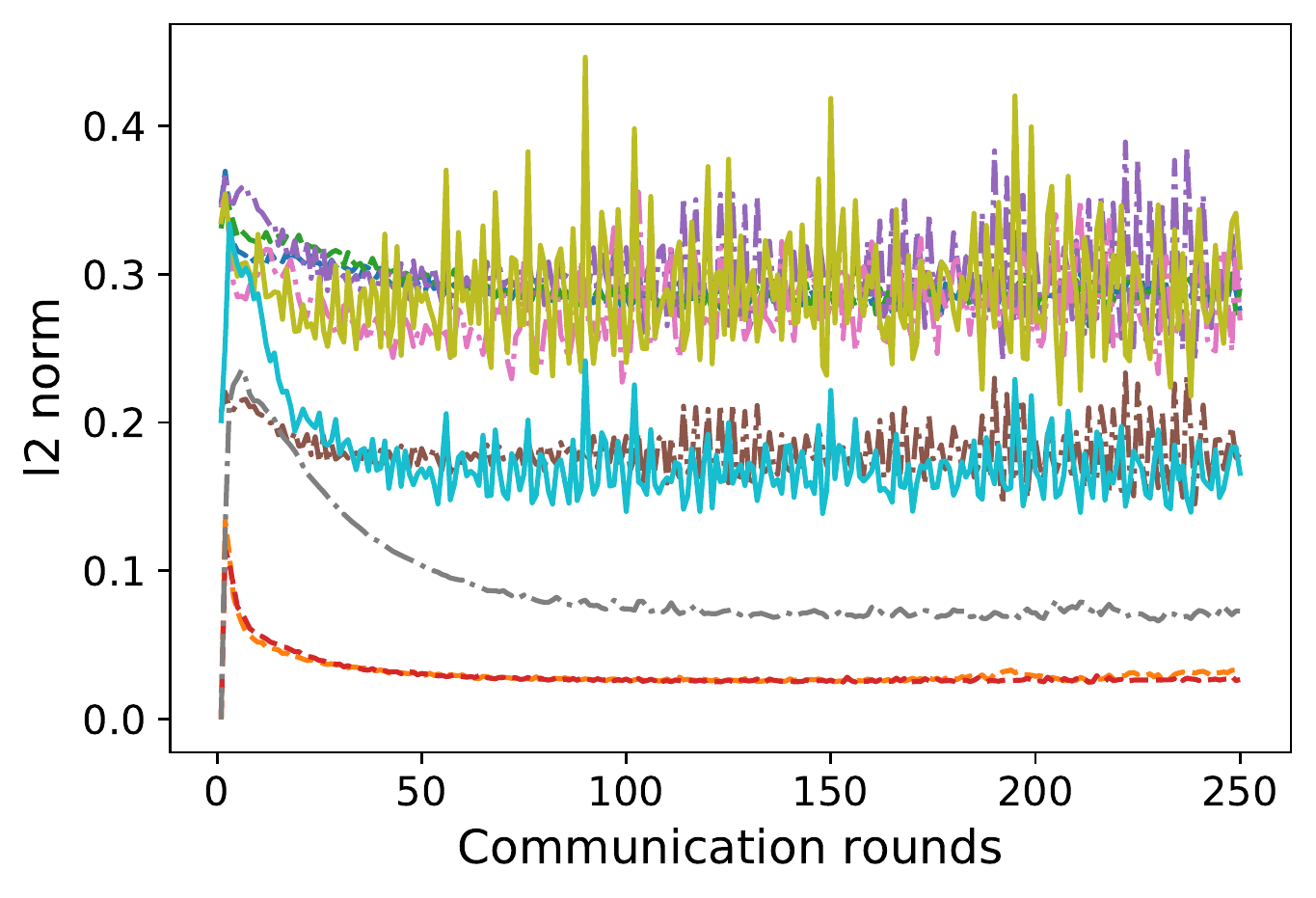}
        \caption{The $N$ and $E$ of convolutional layer calculated on the CIFAR10 dataset
        under non-IID - B condition.}
        \label{fig:cifar10_norm_conv}       
      \end{minipage}
    \end{tabular}
\end{figure}

In this work, we take data distribution into account when performing the
experiments. There are four types of data distribution: i) IID, ii) 
non-IID, iii) balanced, and iv) unbalanced.
For i) IID, clients possess images of all of the defined classes 
in the dataset. In contrast,  for ii) non-IID, clients have images of
only 2 classes (the minimum number of classes that allow meaningful learning).
For the iii) balanced condition, all clients have dataset of the same size.
On the other hand,  iv) unbalanced condition assumes that the
dataset sizes of the clients follows a power-law
distribution, where a small number of clients hold most of the training images.
For privacy reasons, however, we assumed that the server does not know the data size of the
client and set $n_k/n=1/n$ for all $k$.

Experiments are conducted once for each dataset.
For our parameter setting, simulation time for total communication rounds took
an hour for MNIST and 3.5 hours for CIFAR10, respectively, using GeForce GTX 1080 Ti. 

\subsection{Experiment Results}
We evaluated five FL aggregation methods on two datasets with the
aforementioned four different types of dataset distributions in the experiments.
The accuracy comparisons between various experimental conditions 
are summarized in Table~\ref{tab:accuracy}.
In the table, Equation~\eqref{eq:fednorm} is used by Norm-Norm method and
Equation~\eqref{eq:momagg1},~\eqref{eq:momagg2} are used by Momentum. 
B and UB indicate conditions iii) balanced and iv) unbalanced, respectively.

Regardless of the datasets and distributions, the proposed methods achieve
the best prediction accuracy.
Although the momentum appears to be relatively strong, it can be seen that the 
accuracy can be further improved by combining norm normalization.
The changes of accuracy as functions of communication rounds are shown 
in Figure~\ref{fig:cifar10_acc} for non-IID - B CIFAR10.
We can find FedNNNN to significantly improve the convergence rate,
thus speeding up the learning process.

We expect the amount of server update $\norm*{\bm w_{t+1} - \bm w_t}$ in FedNNNN
to be larger than that of FedAvg as discussed in Section~\ref{sec:distagg}.
Figure~\ref{fig:cifar10_norm_fc} and~\ref{fig:cifar10_norm_conv} confirms this
expectation.  
In these figures, $E$ is an average clients update norm 
$\sum_{k=1}^m \frac{n_k}{n} \norm*{\dw^k_{t+1}}$ and 
$N$ is the updated norm on the server side $\norm*{\bm w_{t+1} - \bm w_t}$.
We can clearly observe that the amount of update on the server in FedNNNN is larger 
than the $N$ in FedAvg and FedProx. 
In Figure~\ref{fig:cifar10_norm_fc}, we see that 
$E$ of FedNNNN is smaller than that of FedAvg and 
FedProx, indicating that the weight updates of clients have become smaller because 
of better initial models.
However, in Figure~\ref{fig:cifar10_norm_conv}, $E$ of FedNNNN appears to be
oscillating.
This indicates that the weights of the convolutional layers are less likely to converge
when compared to fully connected layers.
A thorough evaluation of $E$ and $N$ of the fully connected layer and 
convolutional layers for all settings will appear in the appendix.
\section{Conclusion}\label{sec:conclusion}
In this paper, we introduced {\ffn} for the improvement of convergence speed and 
prediction accuracy of FL.
We first defined a norm-based analysis that expresses the amount of model updates by the
$L_{2}$ norm of the sum of the update vectors, and identify that the small size of 
the norm causes slow convergence and accuracy degradation on clients with non-IID 
datasets (the WUDD problem). To solve the WUDD problem, we proposed {\ffn}, 
an aggregation technique that normalize the update vector according to the amount
of unfinished learning. In the experiments, 
we observed that {\ffn} outperforms FedAvg and FedProx, the-state-of-the-art FL frameworks
in terms of both convergence speed and prediction accuracy. In particular, we achieve
up to 5\% accuracy improvement over FedAvg and FedProx on the CIFAR10 dataset.

\bibliography{arxiv}
\bibliographystyle{splncs04}
%
%
%
%
\newpage
\section*{Appendix}
\subsection*{Complete Protocol of FedAvg}
Here, we show the complete protocol of FedAvg.
The server and clients operate as shown in Algorithm~\ref{alg:FedAvg_s} and Algorithm~\ref{alg:FedAvg_c} 
in FedAvg, respectively.
The server executes Algorithm~\ref{alg:FedAvg_s} as follows.
\begin{itemize}
\item Line 1: The server initializes model $\bm w_{0}$ and distribute the model to each client.
\item Line 2: For each round $t=0,1,2...$, the following steps are repeated.
\item Line 3--4: The number of clients to train in this round, $m$, is calculated, and
  the selected set of clients $S_t$ is generated.
\item Line 5: For each client $k\in S_t$:
\item Line 6: The server sends $\bm w_t$ to the clients.
\item Line 7: After the clients train their local models, the server receives trained model 
$\bm w_{t+1}^k$ for each client $k$.
\item Line 9: The server aggregates models averaging them as $\bm w_{t+1}\leftarrow \sum_{k=1}^{m}\frac{n_k}{n}\bm w^k_{t+1}$.
\end{itemize}

On the client side, the following steps are implemented. 
\begin{itemize}
\item Input: The client receives the server model $\bm w_t$ and its label $k$.
\item Line 1: the client
  first obtains some local datasets $\mathcal{P}_k$ (Note that this dataset is not from the server).
  The client splits the entire dataset into mini-batches $\mathcal{B}_{0}, \mathcal{B}_{1}, \cdots, \mathcal{B}_{B-1}$.
  We note the set of the mini-batches as $\mathcal{D}=\{\mathcal{B}_{0}, \mathcal{B}_{1}, \cdots, \mathcal{B}_{B-1}\}$.
\item Line 2: For each epoch $i$ from 1 to $\mathcal{E}$: 
\item Line 3: For each mini-batch $\mathcal{B}\in \mathcal{D}$: 
\item Line 4: The client trains weight as 
$\bm w \leftarrow \bm w - \eta \nabla l(\bm w;\mathcal{B})$.
\item Line 7: The clients send trained weight to the server.
\end{itemize}
\begin{algorithm}[t]
\small
\caption{The algorithmic description of the FedAvg protocol (Server).} 
\label{alg:FedAvg_s}
\begin{algorithmic}[1]
\REQUIRE
\STATE initialize $\bm w_0$
\FOR{each round $t=0,1,2,...$}
\STATE $m \leftarrow \max(C\cdot K,1)$
\STATE $S_t \leftarrow$ (random set of $m$ clients)
\FOR{each client $k\in S_t$ (in parallel)}
\STATE send $\bm w_t$ to the clients
\STATE $\bm w_{t+1}^k \leftarrow$ \textbf{ClientsUpdate($k,\bm w_t$)}
\ENDFOR
\STATE $\bm w_{t+1}\leftarrow \sum_{k=1}^{m}\frac{n_k}{n}\bm w^k_{t+1}$
\ENDFOR
\end{algorithmic}
\end{algorithm}

\begin{algorithm}[t]
\small
\caption{The algorithmic description of the FedAvg protocol (Clients).} 
\label{alg:FedAvg_c}
\begin{algorithmic}[1]
\ENSURE \COMMENT{run on client $k$}
\STATE $\mathcal{D} \leftarrow$ (split $\mathcal{P}_k$ into batch of size $B$)
\FOR{each local epoch $i$ from 1 to $\mathcal{E}$}
\FOR{batch $\mathcal{B}\in \mathcal{D}$}
\STATE $\bm w \leftarrow \bm w - \eta \nabla l(\bm w;\mathcal{B})$
\ENDFOR
\ENDFOR
\STATE return $\bm w$ to server
\end{algorithmic}
\end{algorithm}

\subsection*{Complete Protocol of FedNNNN}
The protocol of FedNNNN is similar to that of the FedAvg protocol. 
First, we note that the client-side procedure remains unchanged, i.e., exactly same as Algorithm~2.
The server-side protocol is also similar but different only for lines 9--12 in Algorithm~\ref{alg:Fed4N}.
\begin{itemize}
\item Line 9--10: $N_{t+1}$ and $E_{t+1}$ are calculated.
\item Line 11--12: the server adds the update vector $\beta
\frac{E_{t+1}}{N_{t+1}}\sum_{k=1}^{m}\frac{n_k}{n}(\bm w^k_{t+1}-\bm w_t)$ to
the momentum and updates weight as $\bm w_{t+1}\leftarrow \bm w_t + \bm d_{t+1}$.
\end{itemize}

\begin{algorithm}[t]
 \small
 \caption{The algorithmic description of the FedNNNN protocol (Server).}
 \label{alg:Fed4N}
 \begin{algorithmic}[1]
 \REQUIRE
 \STATE initialize $\bm w_0, \bm d_0\leftarrow 0$
 \FOR{each round $t=0,1,2,...$}
 \STATE $m \leftarrow \max(C\cdot K,1)$
 \STATE $S_t \leftarrow$ (random set of $m$ clients)
 \FOR{each client $k\in S_t$ (in parallel)}
 \STATE send $\bm w_t$ to the clients
 \STATE $\bm w_{t+1}^k \leftarrow$ \textbf{ClientsUpdate($k,\bm w_t$)}
 \ENDFOR
 \STATE $N_{t+1}\leftarrow \norm*{\sum_{k=1}^{m}\frac{n_k}{n}(\bm w^k_{t+1}-\bm w_t)}$
 \STATE $E_{t+1}\leftarrow \sum_{k=1}^{m}\frac{n_k}{n}\norm*{\bm w^k_{t+1}-\bm w_t}$
 \STATE $\bm d_{t+1}\leftarrow \gamma \bm d_t + \beta \frac{E_{t+1}}{N_{t+1}}\sum_{k=1}^{m}\frac{n_k}{n}(\bm w^k_{t+1}-\bm w_t)$
 \STATE $\bm w_{t+1}\leftarrow \bm w_t + \bm d_{t+1}$
 \ENDFOR
 \end{algorithmic}
\end{algorithm}

\subsection*{Proof of Proposition~1}
In this section, we show a formal proof of Proposition 1. Before delving
into the proof of Proposition~1, we first outline an important lemma.
\begin{lemma}
Let $m$ be some integer. The following inequality holds
\begin{align}
    \norm*{\sum_{k=1}^{m} \alpha_{k}\Delta \bm w^k_{t+1}}\leq  \sum_{k=1}^m \alpha_{k}\norm*{\Delta \bm w^k_{t+1}}
\end{align}
for all $t\in(0, 1, \cdots)$ and any real numbers $\alpha_{1}, \alpha_{2}, \cdots, \alpha_{m}$.
\end{lemma}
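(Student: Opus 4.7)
The lemma is essentially the triangle inequality for the $L_2$ norm combined with its absolute homogeneity. The plan is to proceed by induction on $m$, using only the two defining properties that (i) $\norm{a+b} \leq \norm{a} + \norm{b}$ and (ii) $\norm{\alpha v} = |\alpha|\norm{v}$ for any scalar $\alpha$ and vectors $a,b,v$. Both properties are standard for $\norm*{\cdot}$ on $\mathbb{R}^{d}$ and require no additional justification.

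For the base case $m=1$, the statement reduces to $\norm*{\alpha_{1}\Delta\bm w^{1}_{t+1}} = |\alpha_{1}|\norm*{\Delta\bm w^{1}_{t+1}}$, which is at most $\alpha_{1}\norm*{\Delta\bm w^{1}_{t+1}}$ whenever $\alpha_{1}\ge 0$. For the inductive step, split off the last summand and apply the triangle inequality:
\begin{equation*}
\norm*{\sum_{k=1}^{m}\alpha_{k}\Delta\bm w^{k}_{t+1}}
\leq \norm*{\sum_{k=1}^{m-1}\alpha_{k}\Delta\bm w^{k}_{t+1}} + \norm*{\alpha_{m}\Delta\bm w^{m}_{t+1}},
\end{equation*}
then invoke the inductive hypothesis on the first term and homogeneity on the second. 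Collecting the $m$ resulting pieces yields exactly $\sum_{k=1}^{m}\alpha_{k}\norm*{\Delta\bm w^{k}_{t+1}}$, completing the induction. The parameter $t$ plays no role in the argument; it is simply a fixed index on the vectors.

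There is no real obstacle, but one subtlety deserves a remark. The statement is written for arbitrary real $\alpha_{k}$; strictly speaking, absolute homogeneity gives $\norm*{\alpha_{k}\Delta\bm w^{k}_{t+1}} = |\alpha_{k}|\norm*{\Delta\bm w^{k}_{t+1}}$, so the bound as written requires $\alpha_{k}\ge 0$ (otherwise $|\alpha_{k}|$ should appear on the right). In every application of the lemma in this paper, the weights are $\alpha_{k}=n_{k}/n\ge 0$, so this restriction is automatic and the statement is used in its written form.

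With the lemma in hand, Proposition~\ref{prop:inequality} follows immediately by choosing $\alpha_{k}=n_{k}/n$: the left-hand side becomes $N_{t+1}$ and the right-hand side becomes $E_{t+1}$, giving $N_{t+1}\le E_{t+1}$ for every $t$.
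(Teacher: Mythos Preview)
Your proposal is correct and follows essentially the same approach as the paper: induction on $m$, splitting off one summand, applying the triangle inequality (which the paper idiosyncratically calls the ``Pythagorean inequality''), and then using homogeneity together with the inductive hypothesis. Your additional remark about the implicit nonnegativity of the $\alpha_k$ is a point the paper glosses over (it simply asserts $\norm*{\alpha_{m+1}\Delta\bm w^{m+1}_{t+1}}=\alpha_{m+1}\norm*{\Delta\bm w^{m+1}_{t+1}}$ as ``obvious''), so you are in fact slightly more careful than the original.
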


\begin{proof}
  We prove Lemma~1 through an induction on $m$.\\
  {\bf{Base case}}: Let $m=1$. Then, Equation~(1) becomes
  \begin{align}
    \norm*{\alpha_{1}\Delta \bm w^1_{t+1}}=\norm*{\alpha_{1}\Delta \bm w^1_{t+1}},
  \end{align}
  and the equality obviously holds for any $t=(0, 1, \cdots)$.\\
  
  {\bf{Inductive case}}:
  Assume that Equation~(1) holds for $m$. For $m+1$, the LHS of Equation~(1) becomes
  \begin{align}
    &\norm*{\sum_{k=1}^{m+1} \alpha_{k}\Delta \bm w^k_{t+1}} =\norm*{\sum_{k=1}^{m} \alpha_{k}\Delta \bm w^k_{t+1} + \alpha_{m+1}\Delta\bm w^{m+1}_{t+1}}.&
\end{align}
  
  The Pythagorean
  inequality states that, for any two real vectors ${\bf{x}}, {\bf{y}}$,
  the following inequality holds.
  \begin{align}
    \norm{\bf{x}+\bf{y}}\leq \norm{\bf{x}}+\norm{\bf{y}}.
  \end{align}
  Then, we can derive the following inequality
  \begin{align}
    \norm*{\sum_{k=1}^{m} \alpha_{k}\Delta \bm w^k_{t+1} + \alpha_{m+1}\Delta\bm w^{m+1}_{t+1}}
    \leq \norm*{\sum_{k=1}^{m} \alpha_{k}\Delta \bm w^k_{t+1}} + \norm*{\alpha_{m+1}\Delta\bm w^{m+1}_{t+1}}
  \end{align}
  Since $\norm*{\sum_{k=1}^{m} \alpha_{k}\Delta \bm w^k_{t+1}}\leq \sum_{k=1}^m \alpha_{k}\norm*{\Delta \bm w^k_{t+1}}$
  as assumed, Equation~(5) becomes
  \begin{align}
    \norm*{\sum_{k=1}^{m} \alpha_{k}\Delta \bm w^k_{t+1} + \alpha_{m+1}\Delta\bm w^{m+1}_{t+1}}
    \leq \sum_{k=1}^m \alpha_{k}\norm*{\Delta \bm w^k_{t+1}} + \norm*{\alpha_{m+1}\Delta\bm w^{m+1}_{t+1}}
  \end{align}
  Since it is also obvious that $\norm*{\alpha_{m+1}\Delta\bm w^{m+1}_{t+1}}=\alpha_{m+1}\norm*{\Delta\bm w^{m+1}_{t+1}}$,
  \begin{eqnarray}
      \sum_{k=1}^m \alpha_{k}\norm*{\Delta \bm w^k_{t+1}} + \norm*{\alpha_{m+1}\Delta\bm w^{m+1}_{t+1}}
      &=&\sum_{k=1}^m \alpha_{k}\norm*{\Delta \bm w^k_{t+1}} +\alpha_{m+1}\norm*{\Delta\bm w^{m+1}_{t+1}} \nonumber\\
      &=&\sum_{k=1}^{m+1} \alpha_{k}\norm*{\Delta \bm w^k_{t+1}}
  \end{eqnarray}
  Consequently, we know that
  \begin{align}
      \norm*{\sum_{k=1}^{m+1} \alpha_{k}\Delta \bm w^k_{t+1}}\leq \sum_{k=1}^{m+1} \alpha_{k}\norm*{\Delta \bm w^k_{t+1}},
  \end{align}
  and the lemma follows.
  \qed
\end{proof}

\begin{proposition}
\label{prop:inequality2}
The following inequality holds
\begin{equation}
    N_{t+1}\leq E_{t+1}
\end{equation}
\text{for all $t\in (0, 1, \cdots)$}
\end{proposition}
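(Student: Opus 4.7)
The plan is to derive Proposition~1 as an immediate corollary of Lemma~1, which has already been established via induction on the triangle (``Pythagorean'') inequality. Since Lemma~1 asserts that
\[
    \norm*{\sum_{k=1}^{m} \alpha_{k}\Delta \bm w^k_{t+1}} \leq \sum_{k=1}^{m} \alpha_{k}\norm*{\Delta \bm w^k_{t+1}}
\]
for arbitrary real coefficients $\alpha_{1}, \ldots, \alpha_{m}$, the only task left is to instantiate it with the specific weights used in the definitions of $N_{t+1}$ and $E_{t+1}$.

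Concretely, I would set $\alpha_{k} := n_{k}/n$ for $k = 1, \ldots, m$. By construction in Section~\ref{sec:FL}, each $n_{k}$ is a positive dataset size and $n = \sum_{k=1}^{m} n_{k}$, so these $\alpha_{k}$ are real numbers (in fact nonnegative and summing to one), and hence Lemma~1 applies. Substituting directly into the lemma, the left-hand side becomes $\norm*{\sum_{k=1}^{m} (n_{k}/n)\,\Delta \bm w^{k}_{t+1}}$, which is precisely $N_{t+1}$ by Equation~\eqref{eq:norm}, and the right-hand side becomes $\sum_{k=1}^{m} (n_{k}/n) \norm*{\Delta \bm w^{k}_{t+1}}$, which is precisely $E_{t+1}$ by Equation~\eqref{eq:enorm}. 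Hence $N_{t+1} \leq E_{t+1}$ for every $t \in \{0, 1, 2, \ldots\}$, and the proposition follows.

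There is essentially no obstacle here: all of the mathematical content has been absorbed into Lemma~1, whose inductive step relies only on the standard triangle inequality and the absolute homogeneity of the norm. The only minor point worth flagging in the write-up is to remark explicitly that the weights $n_{k}/n$ are admissible choices for the coefficients $\alpha_{k}$ in Lemma~1, so that the reader sees why the lemma applies without modification. The proof can therefore be presented in a single short paragraph.
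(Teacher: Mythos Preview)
Your proposal is correct and matches the paper's own proof essentially verbatim: the paper also sets $\alpha_{k}=n_{k}/n$ and invokes Lemma~1 directly to conclude $N_{t+1}\leq E_{t+1}$. There is nothing to add or change.
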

\begin{proof}
Since $N_{t+1}=\norm*{\sum_{k=1}^{m} \frac{n_{k}}{n}\Delta \bm w^k_{t+1}}$ and 
$E_{t+1}=\sum_{k=1}^m \frac{n_{k}}{n}\norm*{\Delta \bm w^k_{t+1}}$ as defined, let $\alpha_{k}=\frac{n_{k}}{n}$.
Then, by Lemma~1, we know that 
\begin{align}
N_{t+1}=\norm*{\sum_{k=1}^{m} \alpha_{k}\Delta \bm w^k_{t+1}}\leq \sum_{k=1}^m \alpha_{k}\norm*{\Delta \bm w^k_{t+1}}=E_{t+1},
\end{align}
and the proposition follows.
\qed
\end{proof}


\subsection*{Experiments}
\subsubsection*{Models}

Table~\ref{tab:cnn_MN} and Table~\ref{tab:cnn_C10} show 
the models we used for the experiments of MNIST and CIFAR10, 
respectively.  We adopted the cross entropy loss function 
in all models.

\begin{table}[t!]
    \footnotesize
    \caption{The Neural Architecture Utilized in The MNIST Experiment}
    \begin{center}
        \begin{tabular}{l|c|c|c|c}\hline
            Layer & Input $(c,w,h)$& Output $(c,w,h)$& Kernel & Stride \\\hline\hline
            Input&-&$(1,28,28)$&-&-  \\\hline
            Conv1&$(1,28,28)$&$(20,24,24)$&5&1 \\\hline
            ReLU&-&-&-&- \\\hline
            Maxpool&$(20,24,24)$&$(20,12,12)$&2&2  \\\hline
            Conv2&$(20,12,12)$&$(50,8,8)$&5&1 \\\hline
            ReLU&-&-&-&- \\\hline
            Maxpool&$(50,8,8)$&$(50,4,4)$&2&2  \\\hline
            Fc1&$50*4*4$&$500$&-&-\\\hline
            ReLU&-&-&-&-  \\\hline
            Fc2&$500$&$10$&-&-\\\hline
        \end{tabular}
    \end{center}
    \label{tab:cnn_MN}
\end{table}

\begin{table}[t!]
    \footnotesize
    \caption{The Neural Architecture Utilized in The CIFAR10 Experiment}
    \begin{center}
        \begin{tabular}{l|c|c|c|c}\hline
            Layer & Input $(c,w,h)$& Output $(c,w,h)$& Kernel & Stride \\\hline\hline
            Input&-&$(3,32,32)$&-&-  \\\hline
            Conv11&$(3,32,32)$&$(32,32,32)$&3&1 \\\hline
            BN+ReLU&-&-&-&- \\\hline
            Conv12&$(32,32,32)$&$(32,32,32)$&3&1 \\\hline
            BN+ReLU&-&-&-&- \\\hline
            Maxpool&$(32,32,32)$&$(32,16,16)$&2&2  \\\hline
            Conv21&$(32,16,16)$&$(64,16,16)$&3&1 \\\hline
            BN+ReLU&-&-&-&- \\\hline
            Conv22&$(64,16,16)$&$(64,16,16)$&3&1 \\\hline
            BN+ReLU&-&-&-&- \\\hline
            Maxpool&$(64,16,16)$&$(64,8,8)$&2&2  \\\hline
            Conv31&$(64,8,8)$&$(128,8,8)$&3&1 \\\hline
            BN+ReLU&-&-&-&- \\\hline
            Conv32&$(128,8,8)$&$(128,8,8)$&3&1 \\\hline
            BN+ReLU&-&-&-&- \\\hline
            Maxpool&$(128,8,8)$&$(128,4,4)$&2&2  \\\hline
            Fc1&$128*4*4$&$382$&-&-\\\hline
            ReLU&-&-&-&-  \\\hline
            Fc2&$382$&$192$&-&-\\\hline
            ReLU&-&-&-&-  \\\hline
            Fc3&$192$&$10$&-&-\\\hline
        \end{tabular}
    \end{center}
    \label{tab:cnn_C10}
\end{table}

\subsubsection*{Determination of Hyperparameters}

In this section, we explain how we determined the parameters 
used in the experiment.
The following heuristic procedure is adopted on deciding hyperparameters.
\begin{description}
    \item[Step 1] First of all, divide the training dataset into two datasets.
    One dataset is used to train the neural network model, 
    while the other
    dataset is used to adjust the hyperparameters, referred to as the validation
    dataset.
    \item[Step 2] Decide the learning rate $\eta$. 
    We performed several epochs (30 epochs on CIFAR10 and 50 epochs on MNIST) for the
    training of non-IID and B conditions with $\eta$ of the candidates, 
    and the one with the best test accuracy was used in all methods and conditions.
    In MNIST, we choose from range [0.02,0.15] by every 0.01, and in CIFAR10 we
    choose from range [0.01,0.11] by every 0.02.
    
    \item[Step 3] While fixing $\eta$ at the best value in Step-2, with respect to each dataset and condition, we 
    varied $\mu$, $\beta$ and $\gamma$ independently and choose the one with 
    the best test accuracy.
\end{description}

Table~\ref{tab:params2} lists all the parameters used for each FL method, for each condition.
We choose $\mu$, $\beta$ and $\gamma$ as well as the learning rate $\eta$.
In FedNNNN, we varied $\beta$ and $\gamma$ independently and selected the the combination that gave the best accuracy.
%
%

\begin{table}[tb]
\footnotesize
\caption{Summary of Hyperparameters Used in the Experiment (scenario-wise)}
\label{tab:params2}
\begin{tabular}{lcccccccc}
\multirow{2}{*}{} & \multicolumn{2}{c}{IID-B} & \multicolumn{2}{c}{NonIID-B} & \multicolumn{2}{c}{IID-UB} & \multicolumn{2}{c}{NonIID-UB} \\ 
                           & MNIST     & CIFAR10    & MNIST     & CIFAR10    & MNIST     & CIFAR10    & MNIST     & CIFAR10    \\ \hline\hline
FedProx\\$\mu$             & 0.005     & 0.015      & 0.015     & 0.015      & 0.005     & 0.005       & 0.02      & 0.01       \\ \hline
Norm-Norm \\$\beta$        & 1.1       & 0.6        & 1.0       & 0.6        & 1.0       & 0.7        & 0.9       & 0.7        \\ \hline
Momentum \\$\gamma$        & 0.8       & 0.9        & 0.9       & 0.9        & 0.7       & 0.9        & 0.8       & 0.8        \\ \hline
FedNNNN \\$\beta$          & 0.6       & 0.7        & 0.7       & 0.6        & 0.7       & 0.8        & 0.7       & 0.7        \\ \hline
FedNNNN \\$\gamma$         & 0.7       & 0.8        & 0.8       & 0.7        & 0.7       & 0.8        & 0.8       & 0.6        \\ \hline
\end{tabular}
\end{table}
   
\subsubsection*{Experimental Results}
Here, we show accuracy curve and update norm on MNIST, CIFAR10.
Figures~\ref{fig:mnist_acc_iid-b} $\sim$ ~\ref{fig:mnist_norm_conv_niid-ub} show the
accuracy and update norm on MNIST.
Figure~\ref{fig:cifar10_acc_iid-b}  $\sim$~\ref{fig:cifar10_norm_conv_niid-ub} show those
 on CIFAR10 is shown.

\begin{figure}[t]
    \begin{tabular}{ccc}
        \begin{minipage}[t]{0.32\linewidth}
        \centering
        \includegraphics[width=\linewidth]{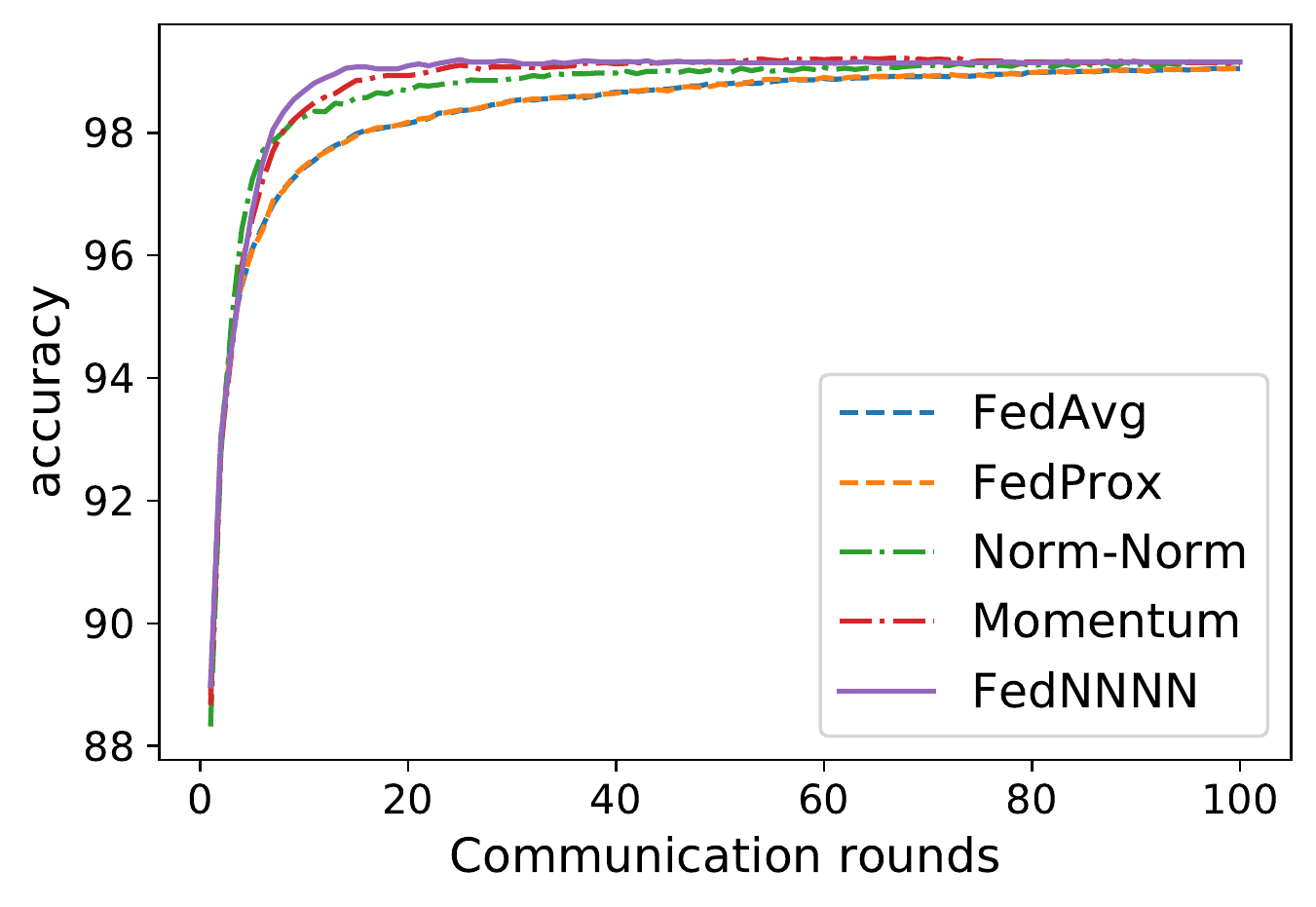}
        \caption{Accuracy curves on the MNIST dataset under IID - B condition.}
        \label{fig:mnist_acc_iid-b}       
      \end{minipage}
      \hspace{0.1cm}
      \begin{minipage}[t]{0.32\linewidth}
        \centering
        \includegraphics[width=\linewidth]{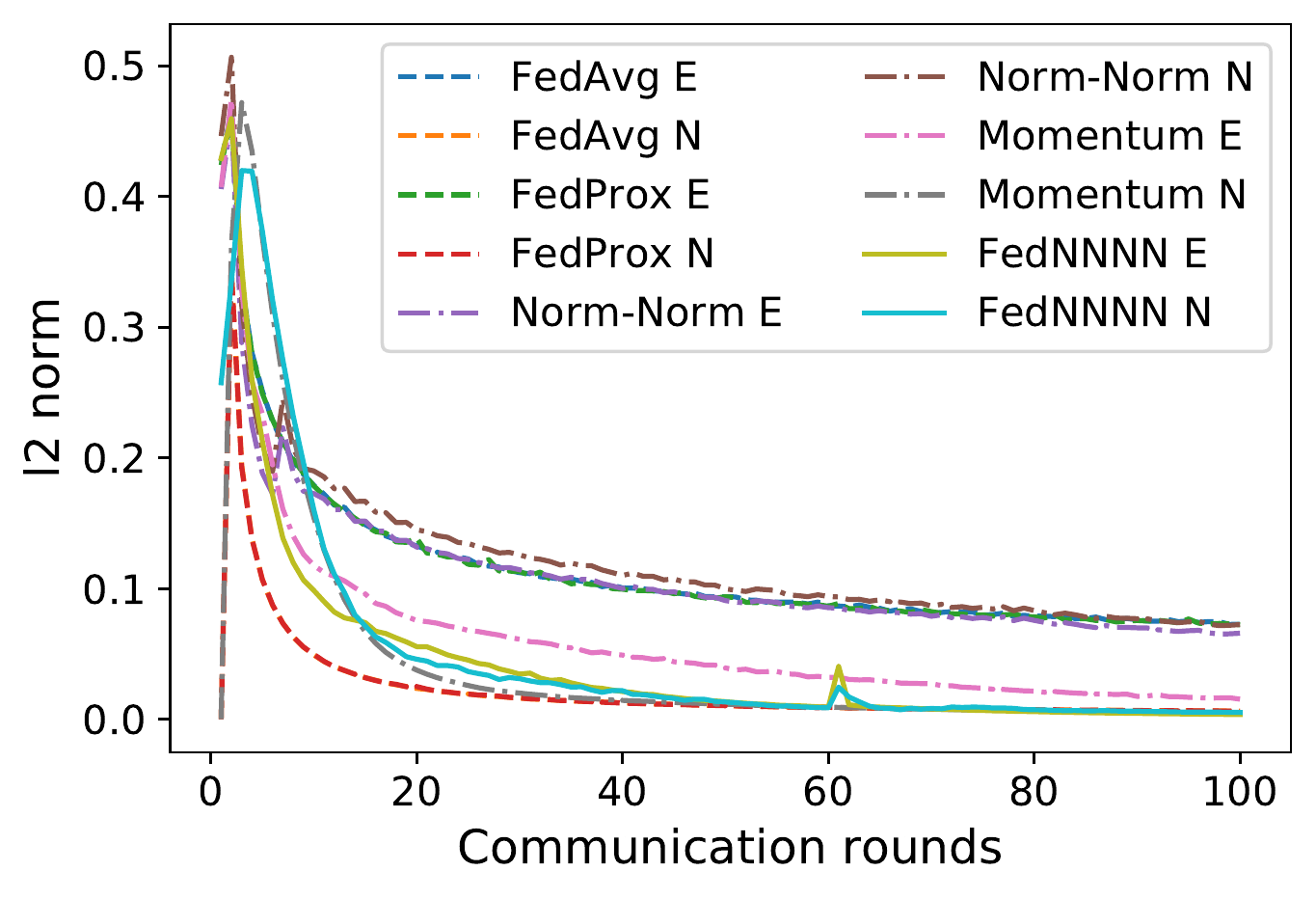}
        \caption{The $N$ and $E$ of fully connected layer calculated on the 
        MNIST dataset under IID - B condition.}
        \label{fig:mnist_norm_fc_iid-b}       
      \end{minipage} 
      \hspace{0.1cm}
      \begin{minipage}[t]{0.32\linewidth}
        \centering
        \includegraphics[width=\linewidth]{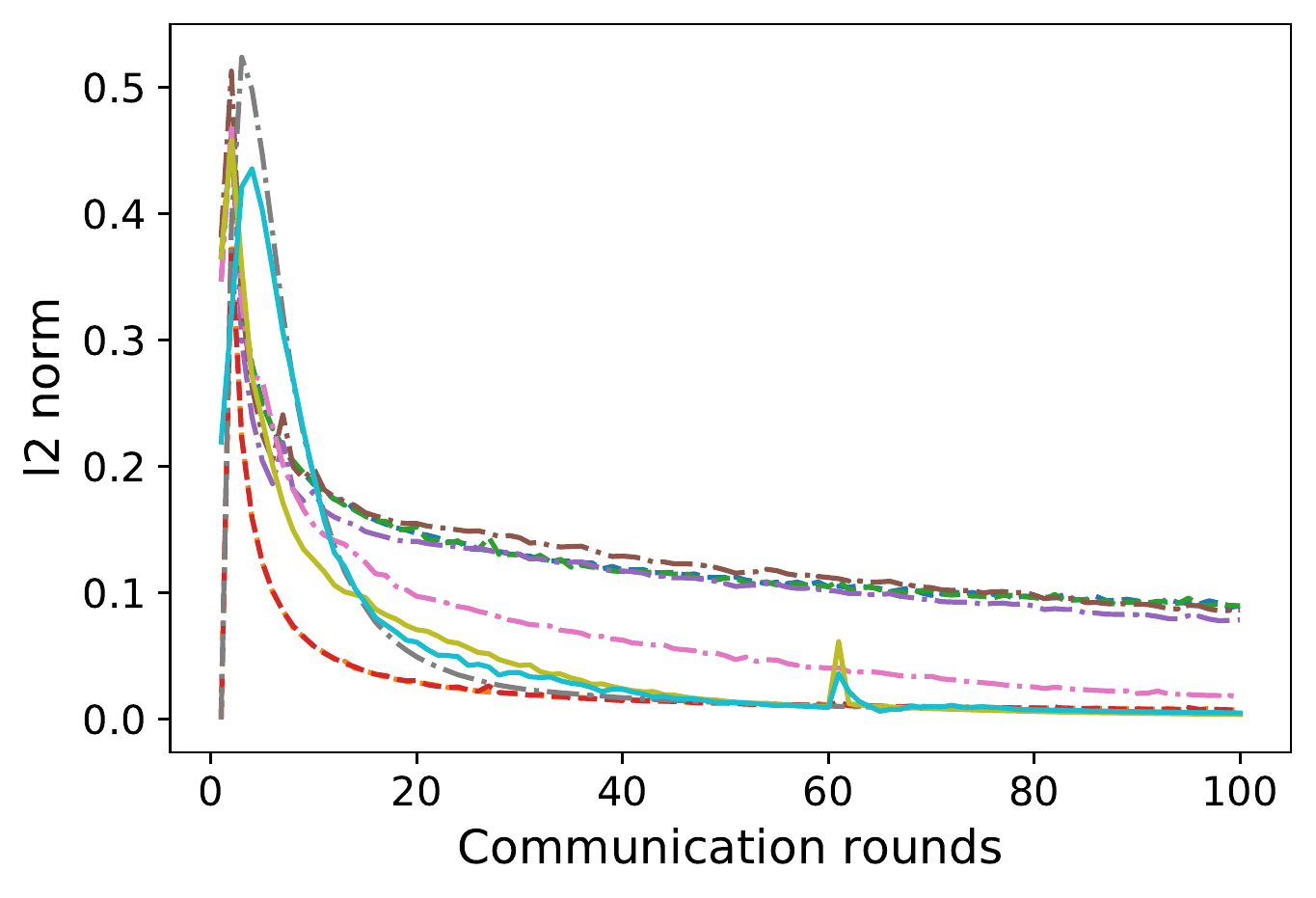}
        \caption{The $N$ and $E$ of convolutional layer calculated on the MNIST
        dataset under IID - B condition.}
        \label{fig:mnist_norm_conv_iid-b}       
      \end{minipage} \\

      \begin{minipage}[t]{0.32\linewidth}
        \centering
        \includegraphics[width=\linewidth]{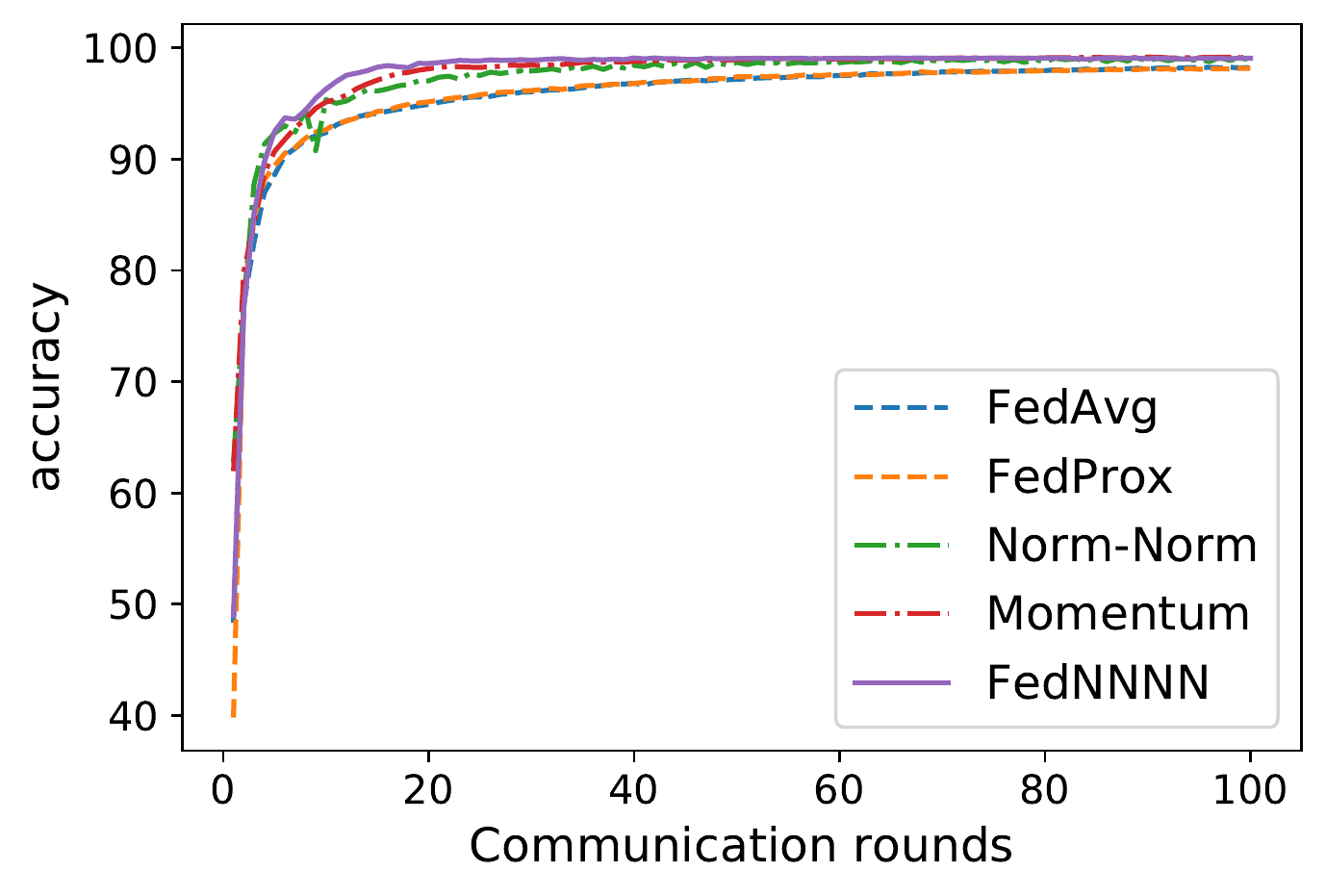}
        \caption{Accuracy curves on the MNIST dataset under non-IID - B condition.}
        \label{fig:mnist_acc_niid-b}       
      \end{minipage}
      \hspace{0.1cm}
      \begin{minipage}[t]{0.32\linewidth}
        \centering
        \includegraphics[width=\linewidth]{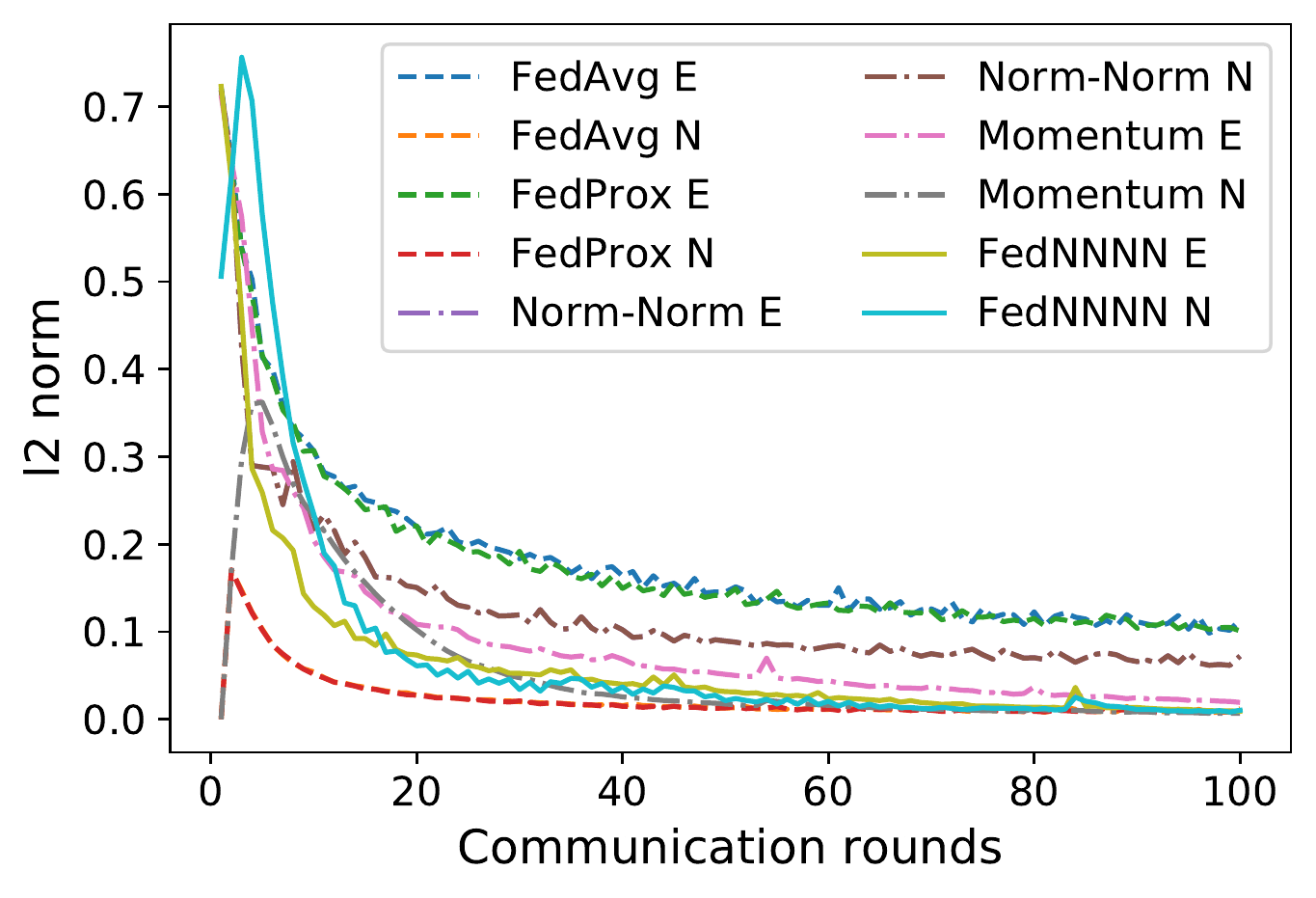}
        \caption{The $N$ and $E$ of fully connected layer calculated on the 
        MNIST dataset under non-IID - B condition.}
        \label{fig:mnist_norm_fc_niid-b}       
      \end{minipage} 
      \hspace{0.1cm}
      \begin{minipage}[t]{0.32\linewidth}
        \centering
        \includegraphics[width=\linewidth]{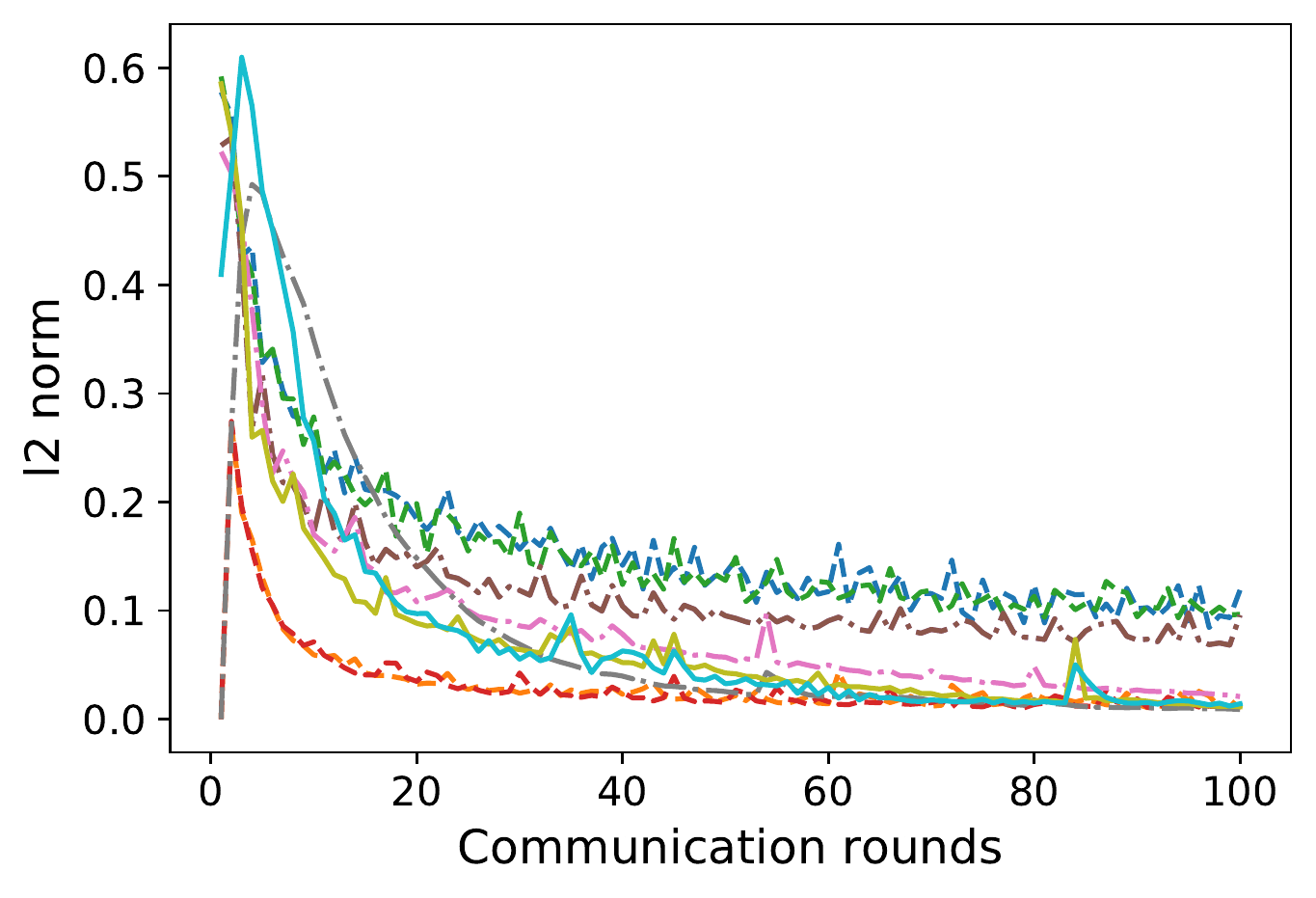}
        \caption{The $N$ and $E$ of convolutional layer calculated on the MNIST
        dataset under non-IID - B condition.}
        \label{fig:mnist_norm_conv}       
      \end{minipage}\\
%
        \begin{minipage}[t]{0.32\linewidth}
        \centering
        \includegraphics[width=\linewidth]{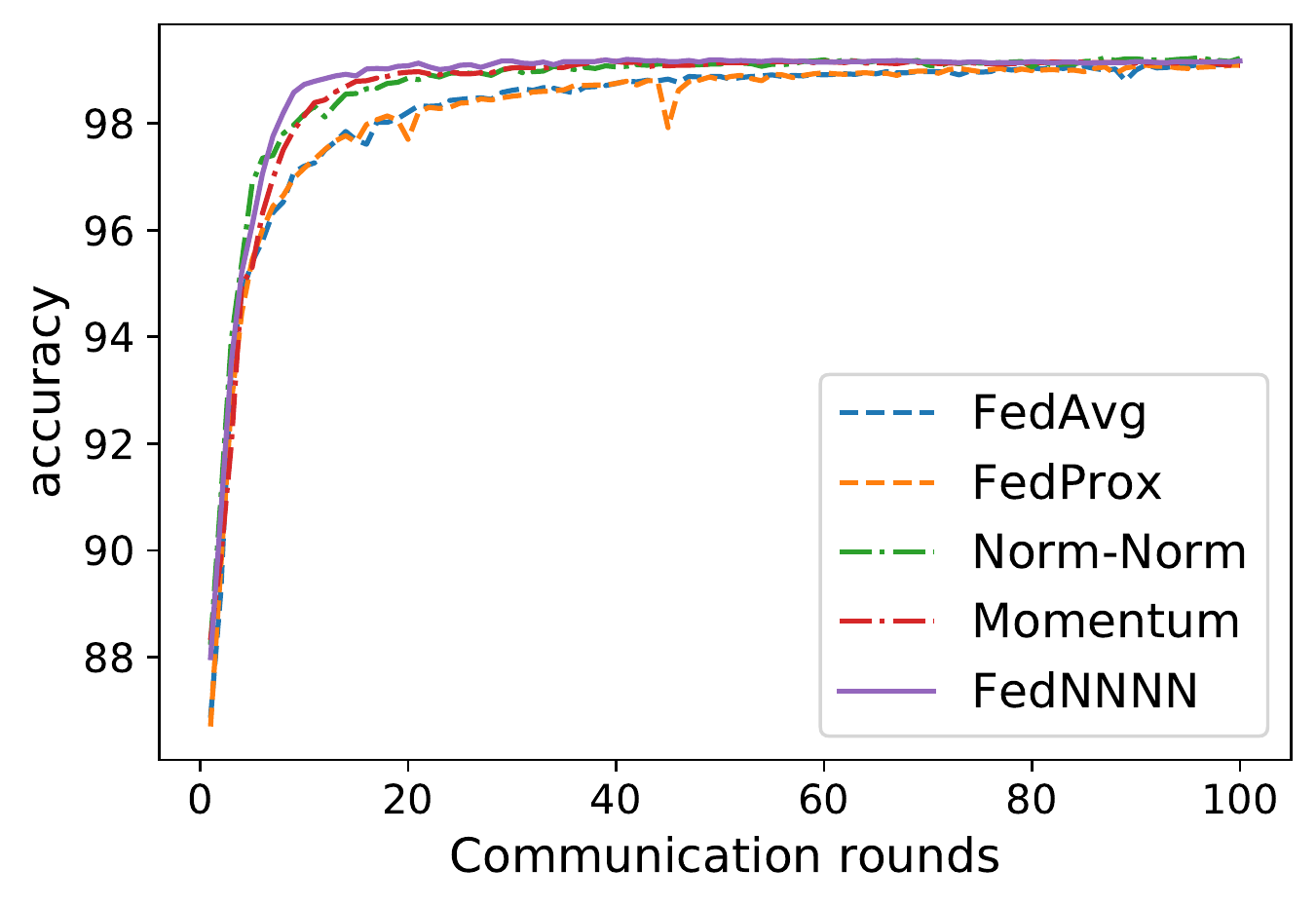}
        \caption{Accuracy curves on the MNIST dataset under IID - UB condition.}
        \label{fig:mnist_acc_iid-ub}       
      \end{minipage}
      \hspace{0.1cm}
      \begin{minipage}[t]{0.32\linewidth}
        \centering
        \includegraphics[width=\linewidth]{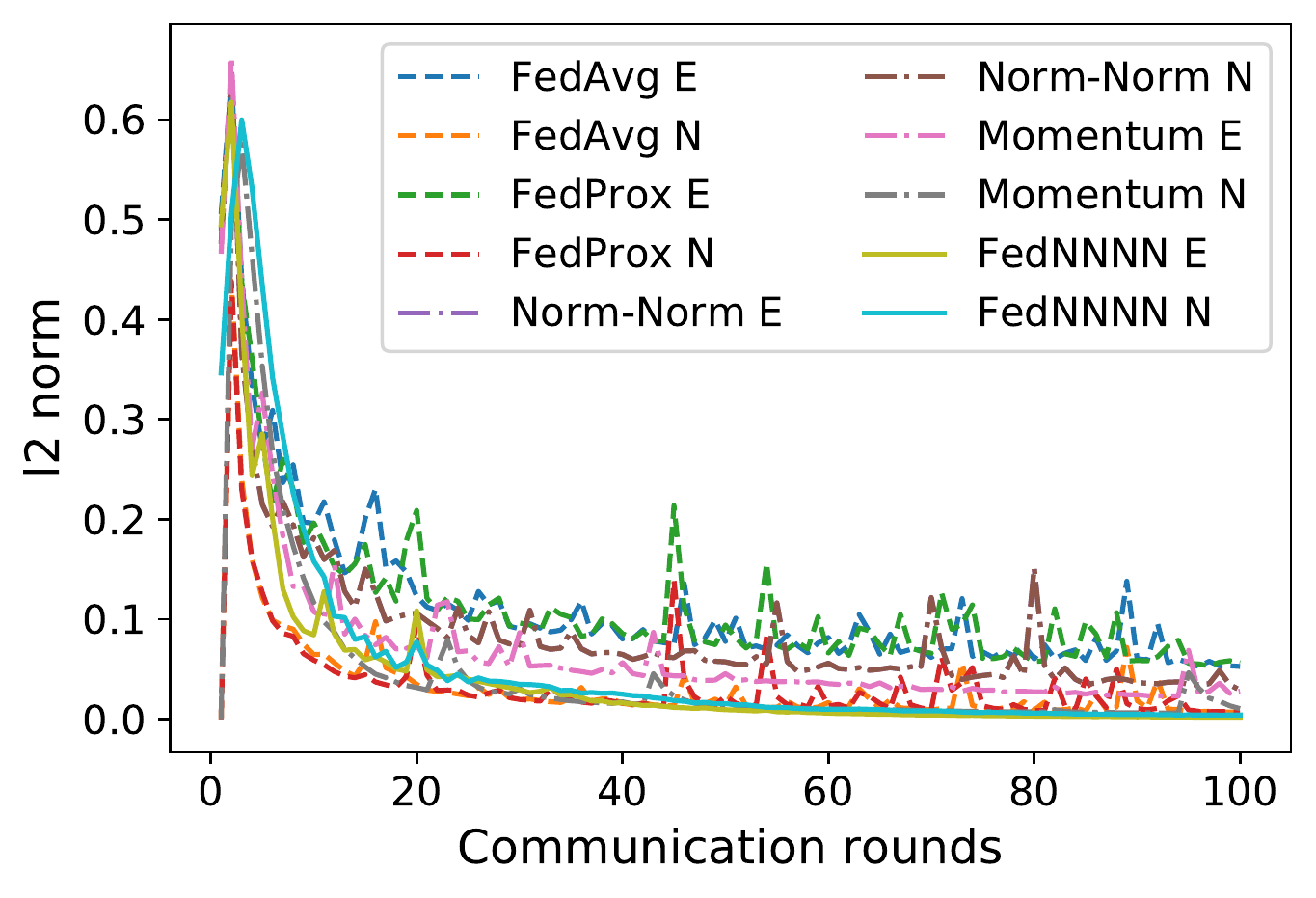}
        \caption{The $N$ and $E$ of fully connected layer calculated on the MNIST
        dataset under IID - UB condition.}
        \label{fig:mnist_norm_fc_iid-ub}       
      \end{minipage} 
      \hspace{0.1cm}
      \begin{minipage}[t]{0.32\linewidth}
        \centering
        \includegraphics[width=\linewidth]{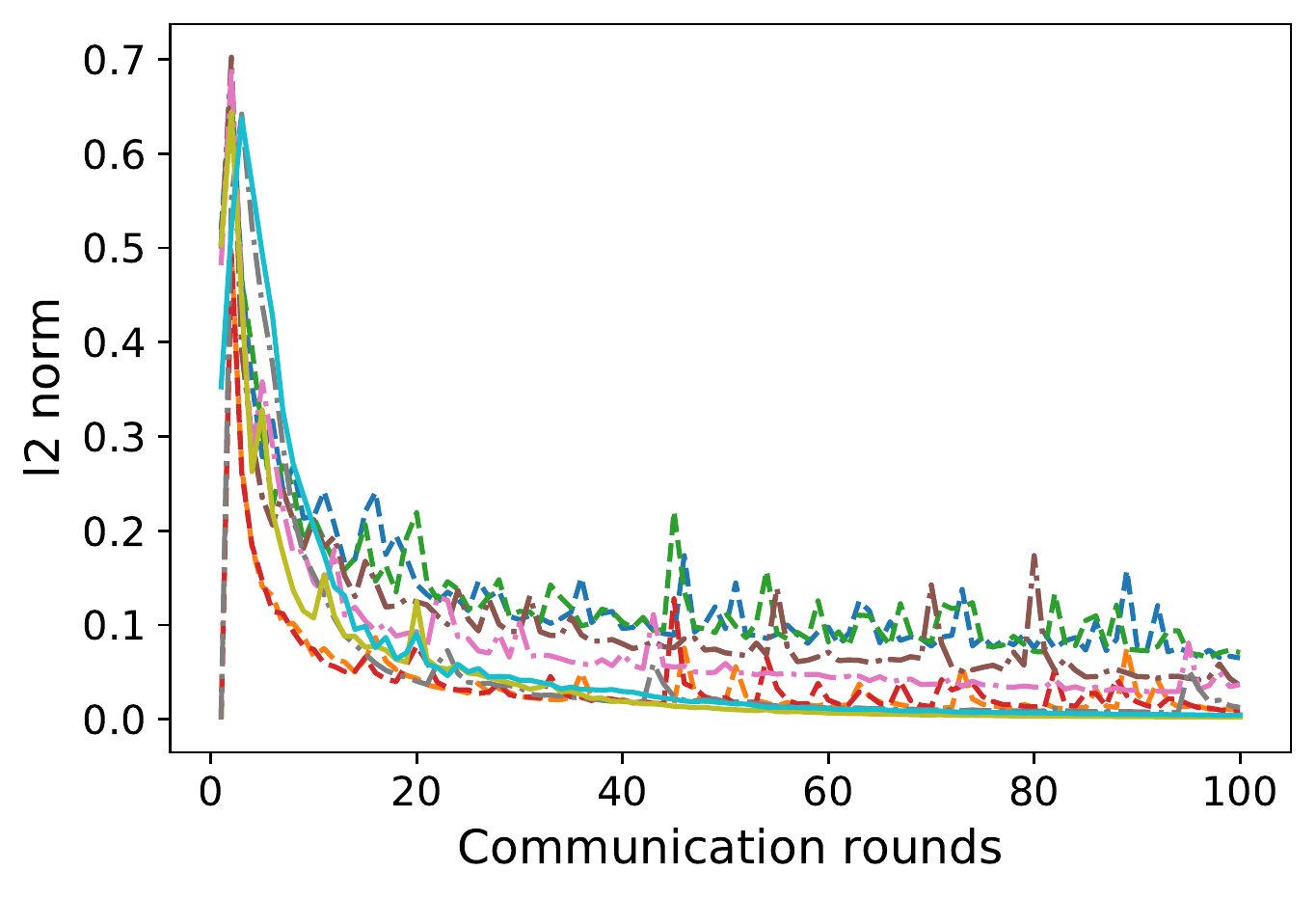}
        \caption{The $N$ and $E$ of convolutional layer calculated on the MNIST
        dataset under IID - UB condition.}
        \label{fig:mnist_norm_conv_iid-ub}       
      \end{minipage}\\
%
        \begin{minipage}[t]{0.32\linewidth}
        \centering
        \includegraphics[width=\linewidth]{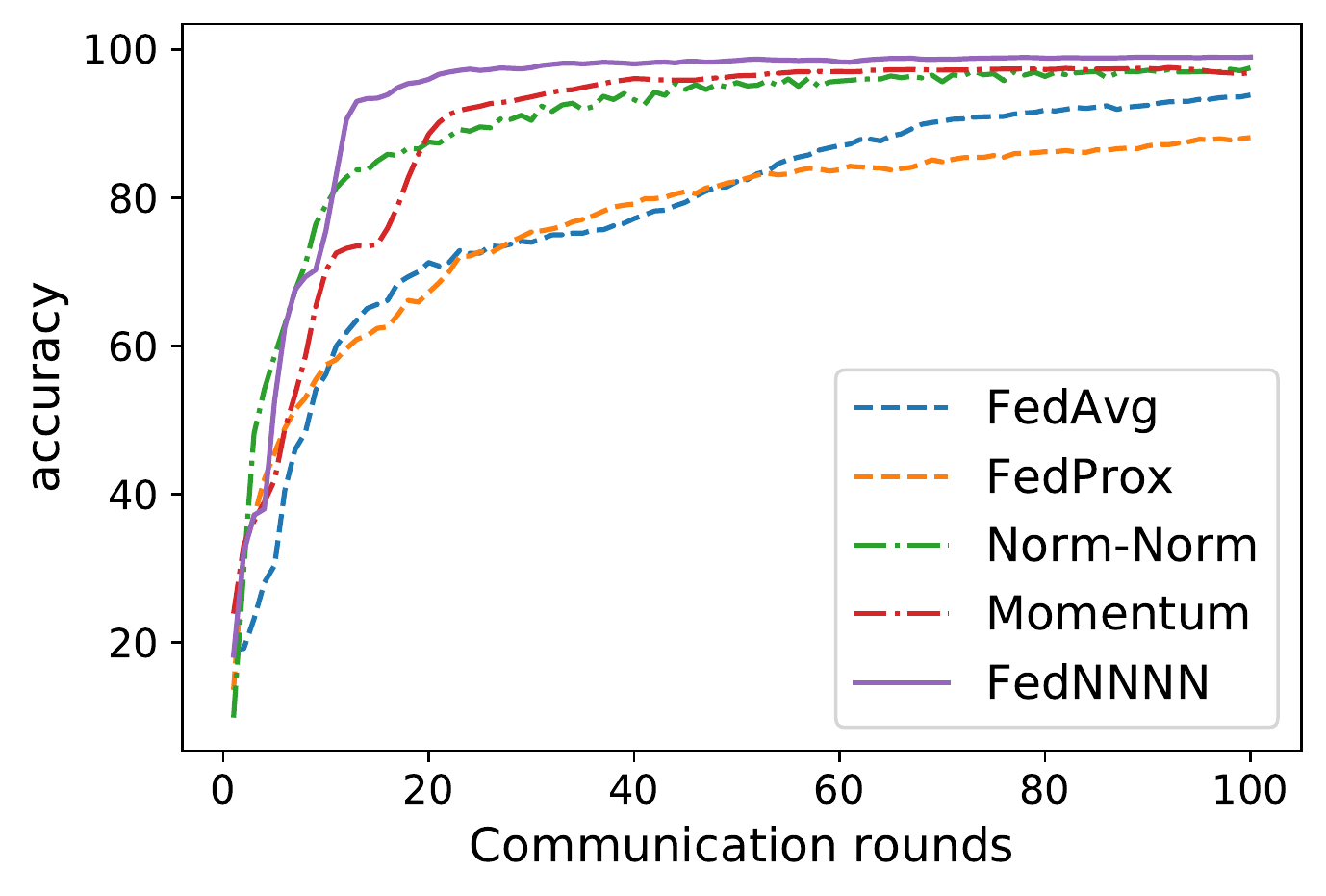}
        \caption{Accuracy curves on the MNIST dataset under non-IID - UB condition.}
        \label{fig:mnist_acc_niid-ub}       
      \end{minipage}
      \hspace{0.1cm}
      \begin{minipage}[t]{0.32\linewidth}
        \centering
        \includegraphics[width=\linewidth]{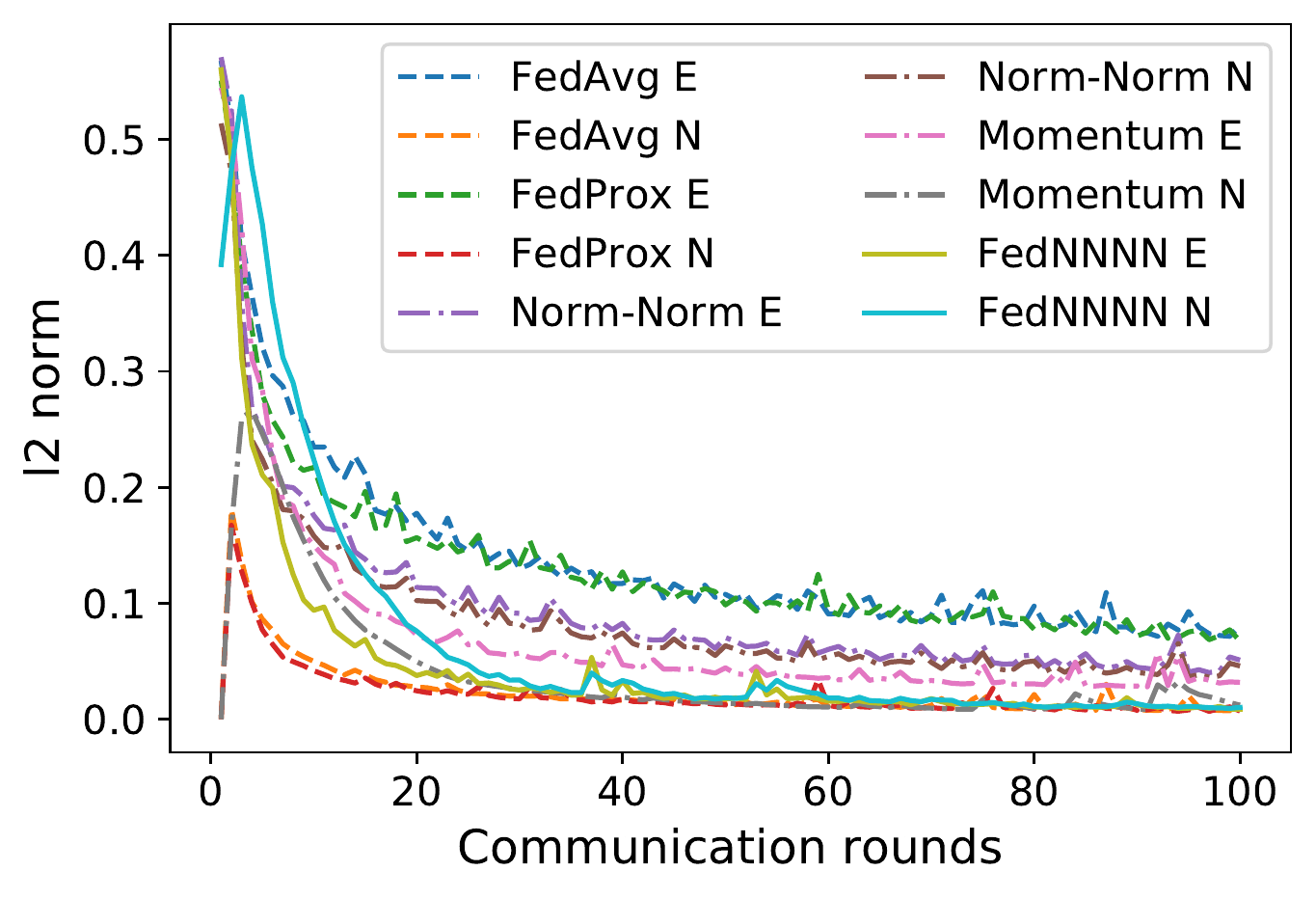}
        \caption{The $N$ and $E$ of fully connected layer calculated on the MNIST
        dataset under non-IID - UB condition.}
        \label{fig:mnist_norm_fc_niid-ub}       
      \end{minipage} 
      \hspace{0.1cm}
      \begin{minipage}[t]{0.32\linewidth}
        \centering
        \includegraphics[width=\linewidth]{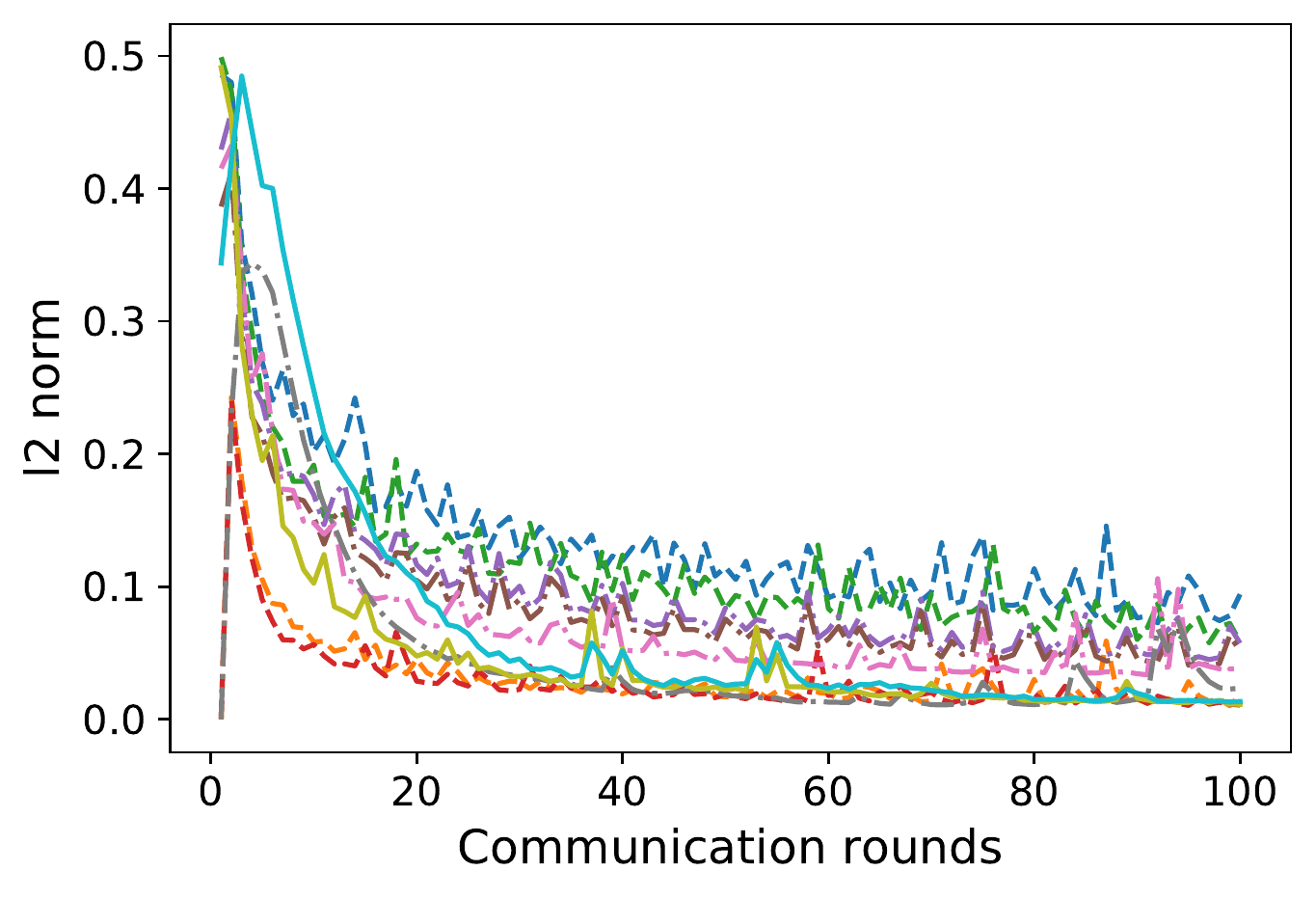}
        \caption{The $N$ and $E$ of convolutional layer calculated on the MNIST
        dataset under non-IID - UB condition.}
        \label{fig:mnist_norm_conv_niid-ub}       
      \end{minipage}
    \end{tabular}
\end{figure}

\begin{figure}[t]
    \begin{tabular}{ccc}
        \begin{minipage}[t]{0.32\linewidth}
        \centering
        \includegraphics[width=\linewidth]{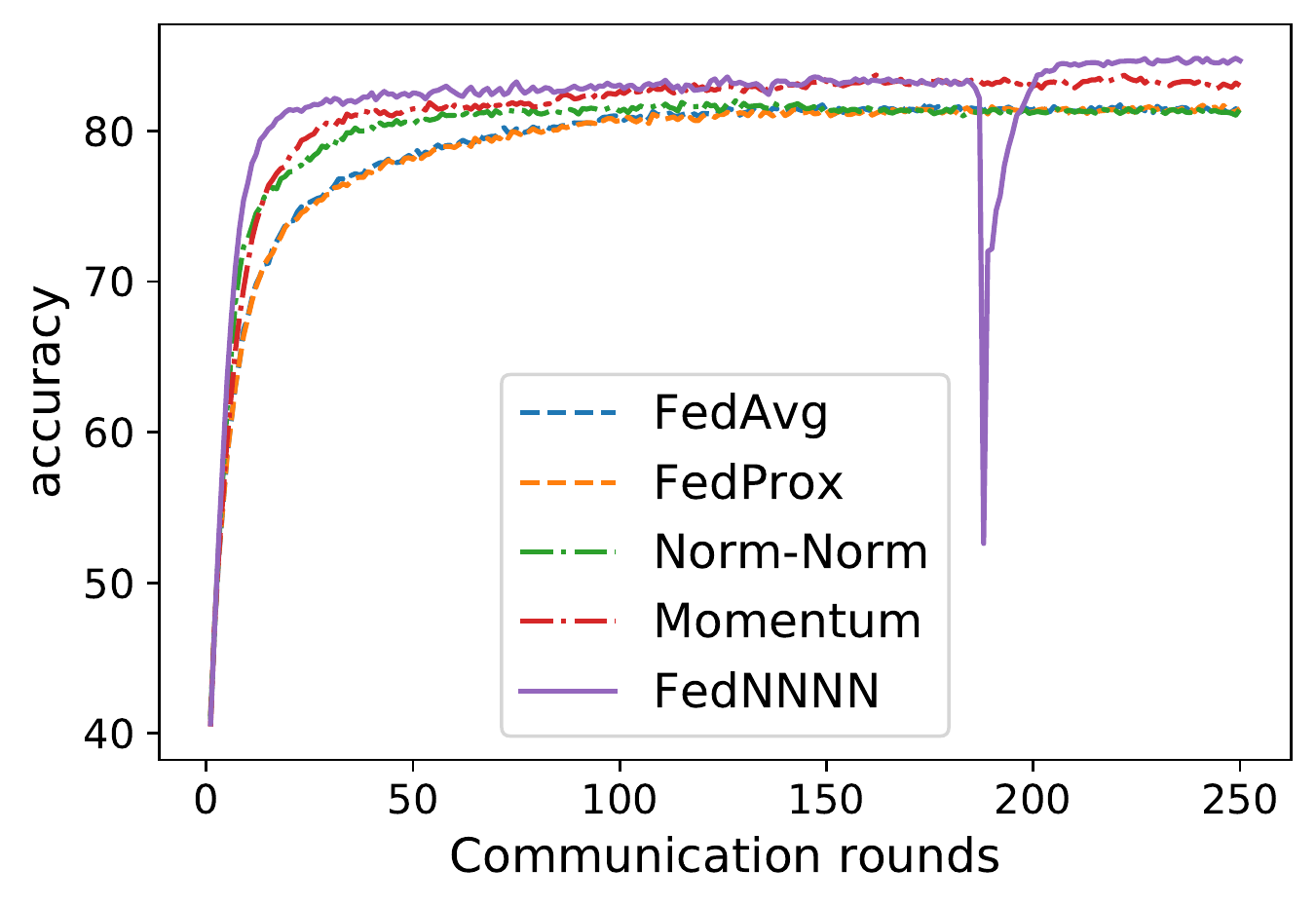}
        \caption{Accuracy curves on the CIFAR10 dataset under IID - B condition.}
        \label{fig:cifar10_acc_iid-b}       
      \end{minipage}
      \hspace{0.1cm}
      \begin{minipage}[t]{0.32\linewidth}
        \centering
        \includegraphics[width=\linewidth]{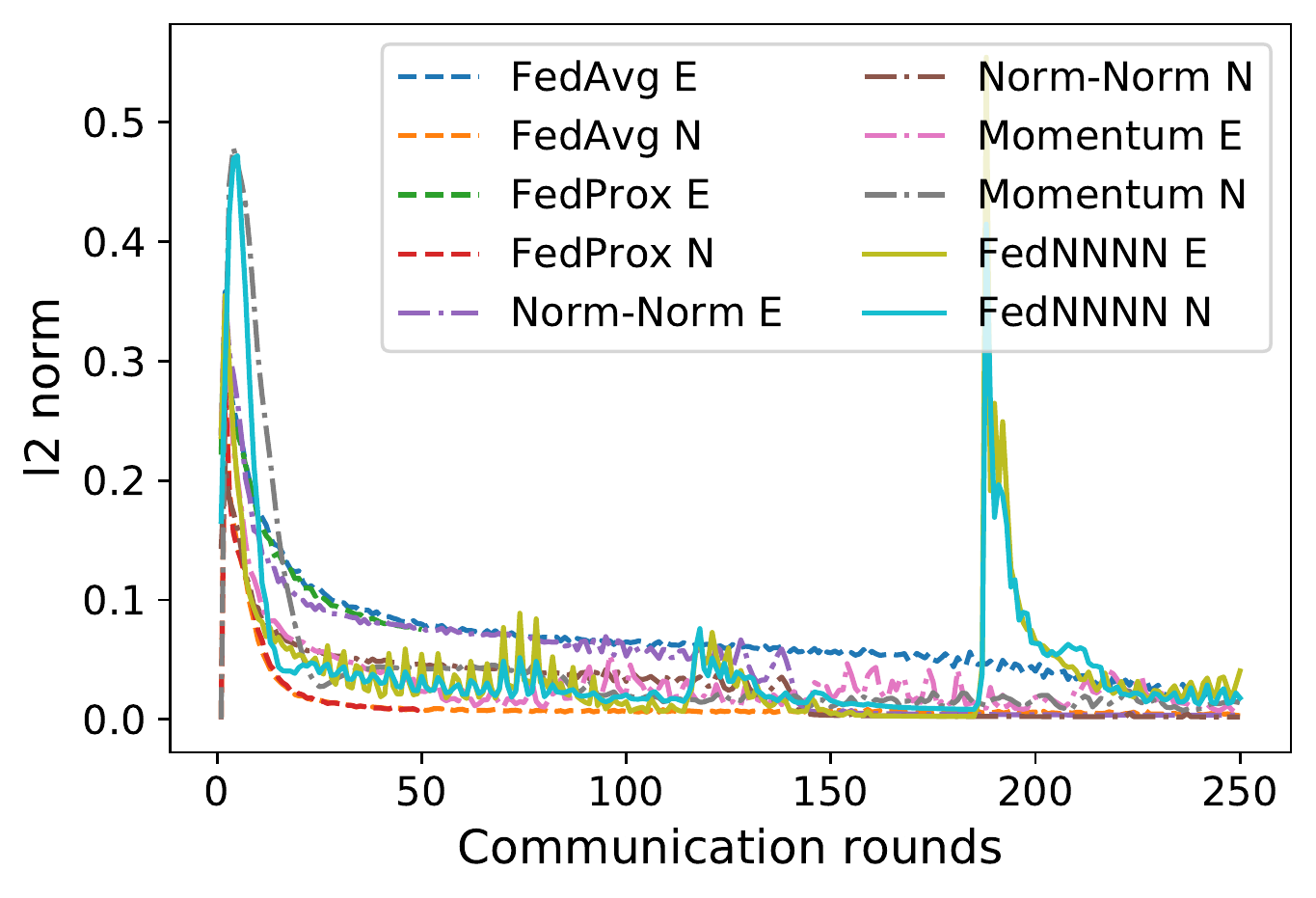}
        \caption{The $N$ and $E$ of fully connected layer calculated on the CIFAR10 dataset
        under IID - B condition.}
        \label{fig:cifar10_norm_fc_iid-b}       
      \end{minipage} 
      \hspace{0.1cm}
      \begin{minipage}[t]{0.32\linewidth}
        \centering
        \includegraphics[width=\linewidth]{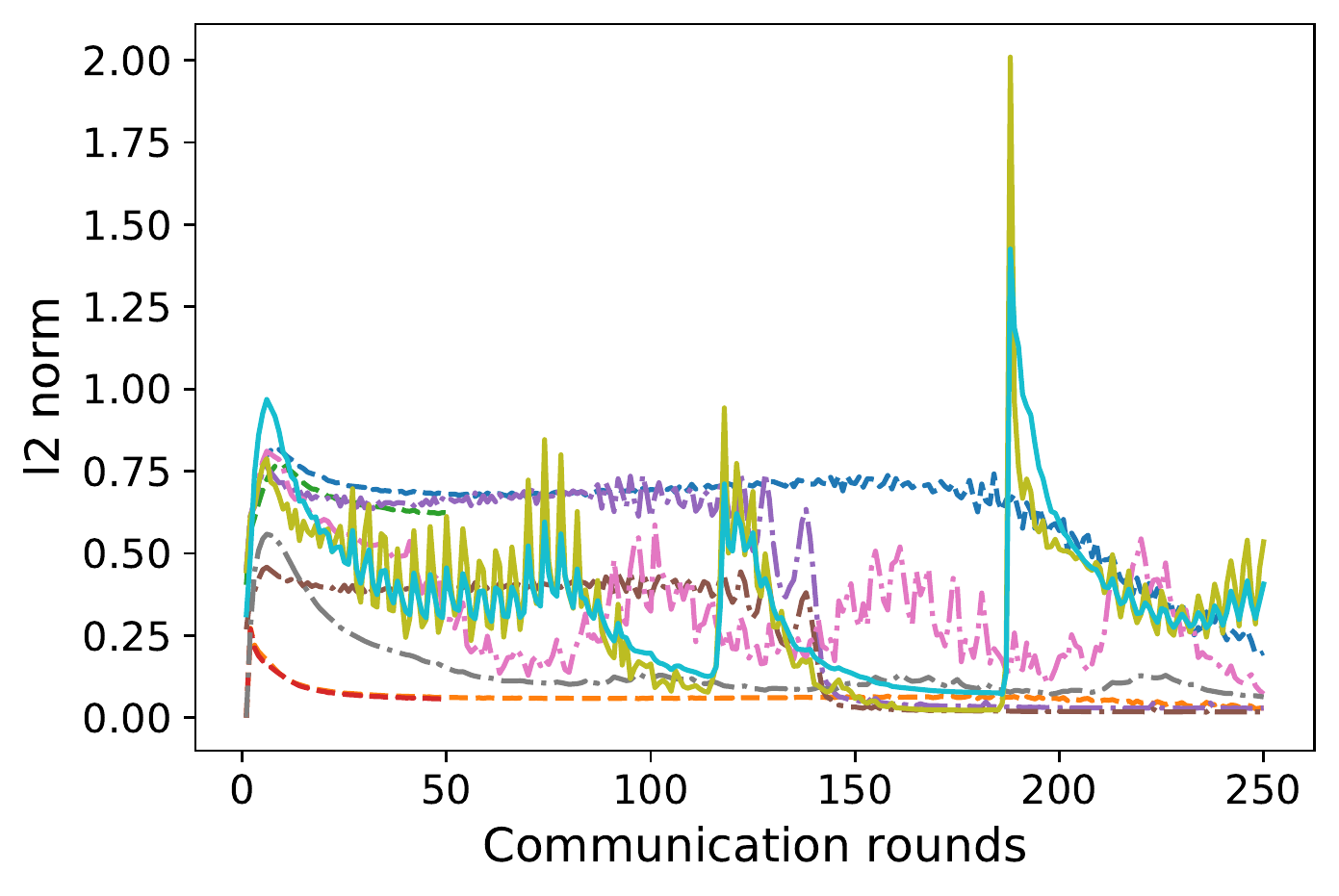}
        \caption{The $N$ and $E$ of convolutional layer calculated on the CIFAR10 dataset
        under IID - B condition.}
        \label{fig:cifar10_norm_conv_iid-b}       
      \end{minipage} \\
%
        \begin{minipage}[t]{0.32\linewidth}
        \centering
        \includegraphics[width=\linewidth]{figure/accuracy_comp.pdf}
        \caption{Accuracy curves on the CIFAR10 dataset under non-IID - B condition.}
        \label{fig:cifar10_acc2}       
      \end{minipage}
      \hspace{0.1cm}
      \begin{minipage}[t]{0.32\linewidth}
        \centering
        \includegraphics[width=\linewidth]{figure/norm_comp_fc3.pdf}
        \caption{The $N$ and $E$ of fully connected layer calculated on the CIFAR10 dataset
        under non-IID - B condition.}
        \label{fig:cifar10_norm_fc2}       
      \end{minipage} 
      \hspace{0.1cm}
      \begin{minipage}[t]{0.32\linewidth}
        \centering
        \includegraphics[width=\linewidth]{figure/norm_comp_conv32.pdf}
        \caption{The $N$ and $E$ of convolutional layer calculated on the CIFAR10 dataset
        under non-IID - B condition.}
        \label{fig:cifar10_norm_conv2}       
      \end{minipage}\\
%
        \begin{minipage}[t]{0.32\linewidth}
        \centering
        \includegraphics[width=\linewidth]{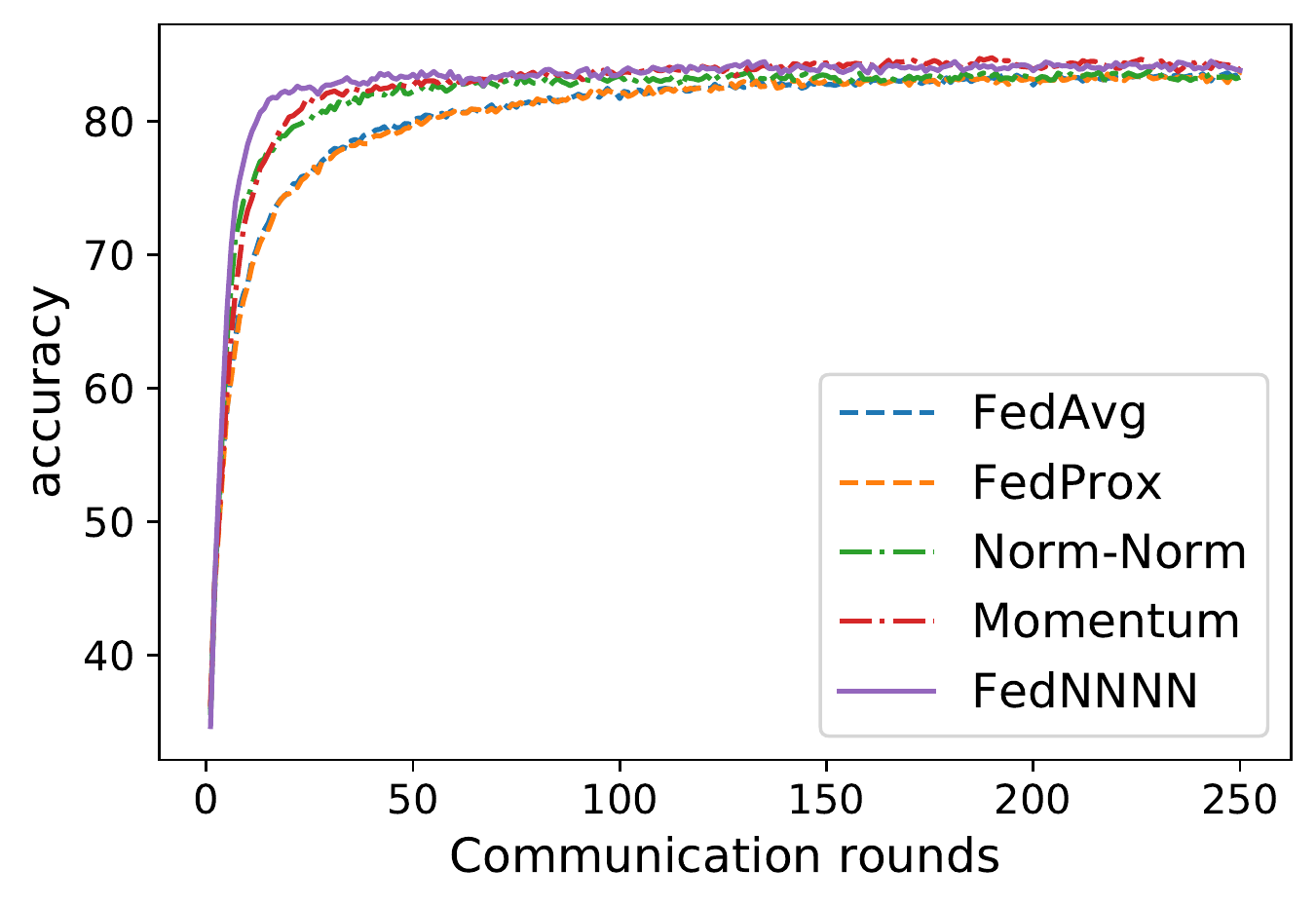}
        \caption{Accuracy curves on the CIFAR10 dataset under IID - UB condition.}
        \label{fig:cifar10_acc_iid-ub}       
      \end{minipage}
      \hspace{0.1cm}
      \begin{minipage}[t]{0.32\linewidth}
        \centering
        \includegraphics[width=\linewidth]{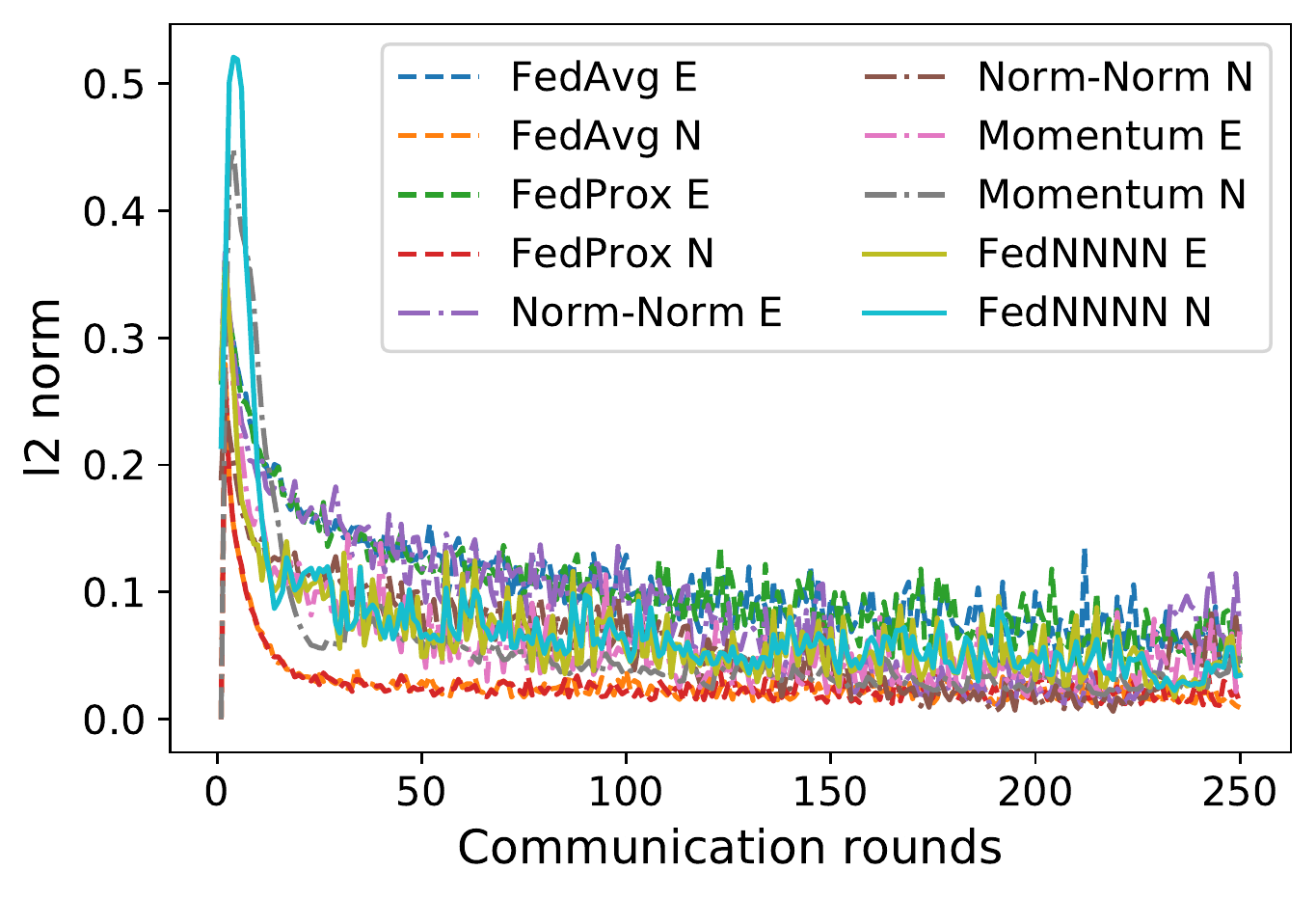}
        \caption{The $N$ and $E$ of fully connected layer calculated on the CIFAR10 dataset
        under IID - UB condition.}
        \label{fig:cifar10_norm_fc_iid-ub}       
      \end{minipage} 
      \hspace{0.1cm}
      \begin{minipage}[t]{0.32\linewidth}
        \centering
        \includegraphics[width=\linewidth]{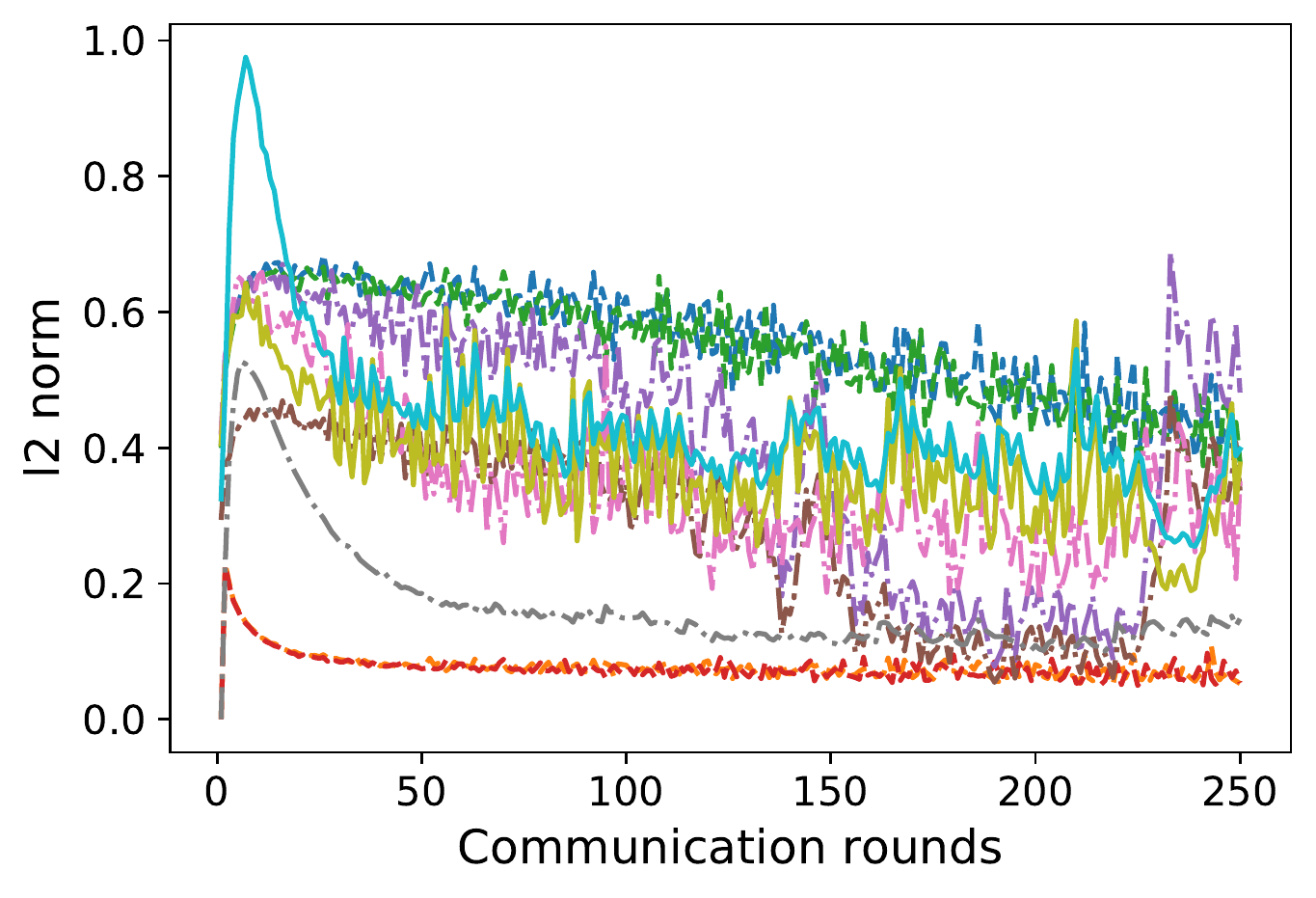}
        \caption{The $N$ and $E$ of convolutional layer calculated on the CIFAR10 dataset
        under IID - UB condition.}
        \label{fig:cifar10_norm_conv_iid-ub}       
      \end{minipage}\\
%
        \begin{minipage}[t]{0.32\linewidth}
        \centering
        \includegraphics[width=\linewidth]{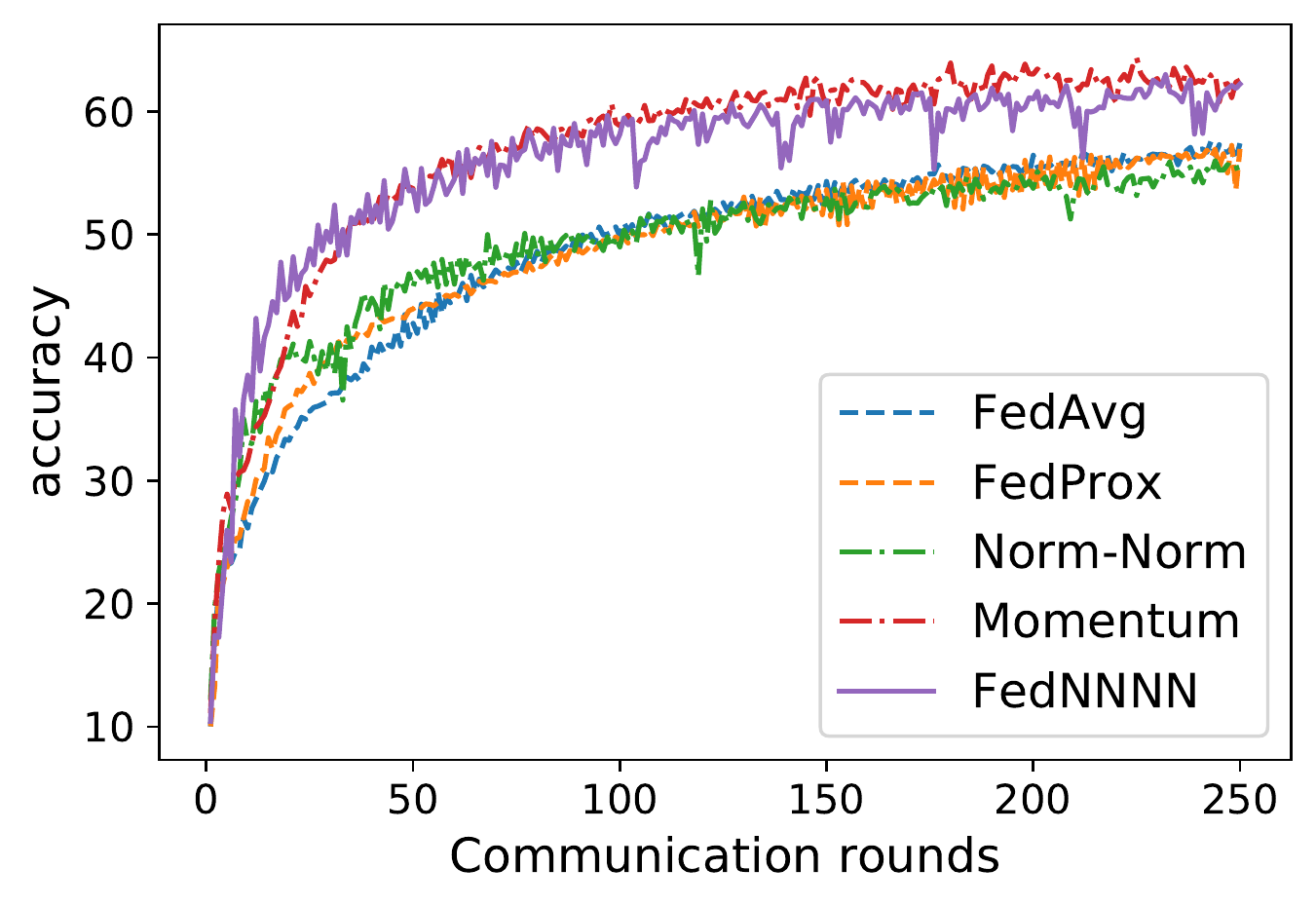}
        \caption{Accuracy curves on the CIFAR10 dataset under non-IID - UB condition.}
        \label{fig:cifar10_acc_niid-ub}       
      \end{minipage}
      \hspace{0.1cm}
      \begin{minipage}[t]{0.32\linewidth}
        \centering
        \includegraphics[width=\linewidth]{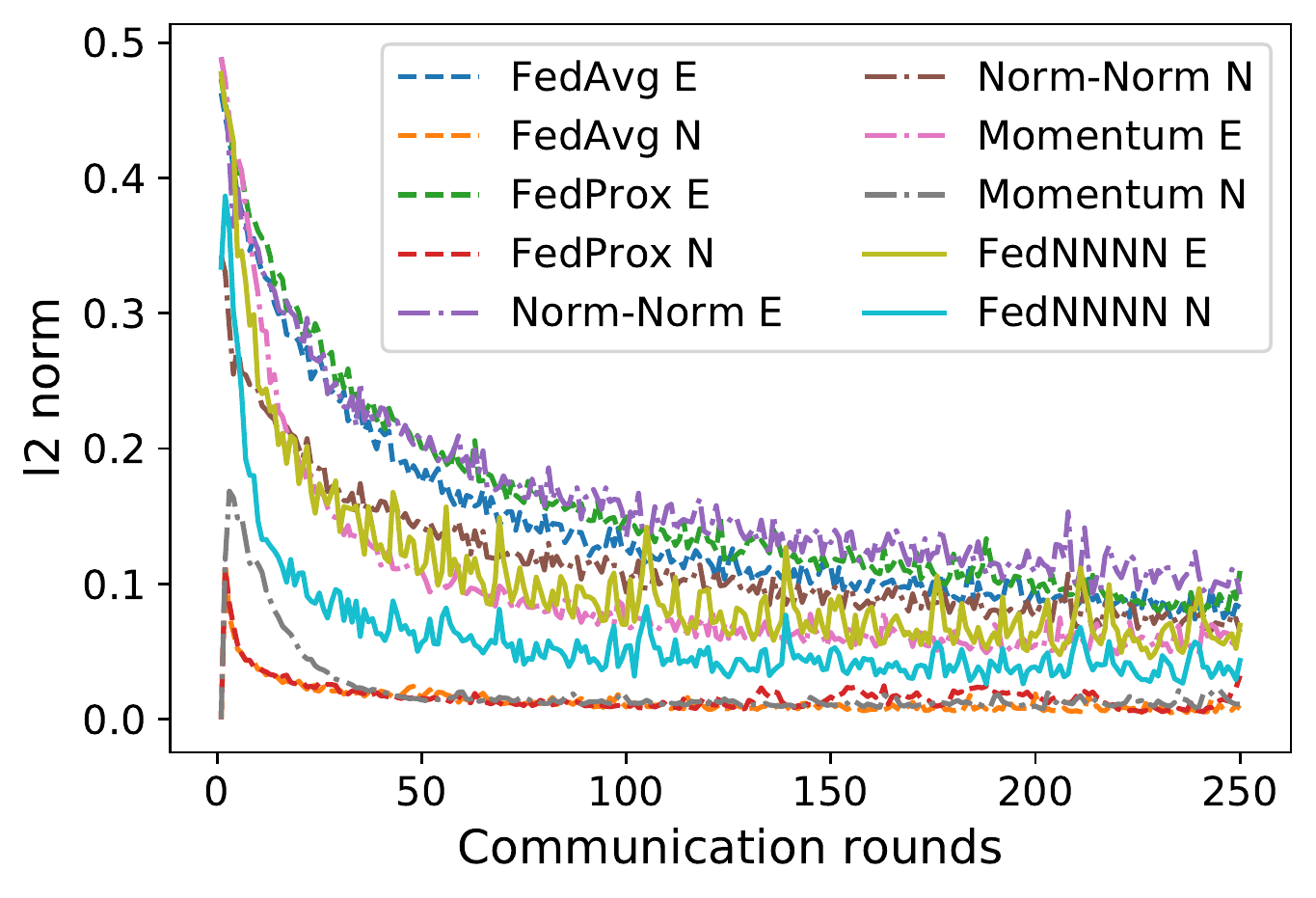}
        \caption{The $N$ and $E$ of fully connected layer calculated on the CIFAR10 dataset
        under non-IID - UB condition.}
        \label{fig:cifar10_norm_fc_niid-ub}       
      \end{minipage} 
      \hspace{0.1cm}
      \begin{minipage}[t]{0.32\linewidth}
        \centering
        \includegraphics[width=\linewidth]{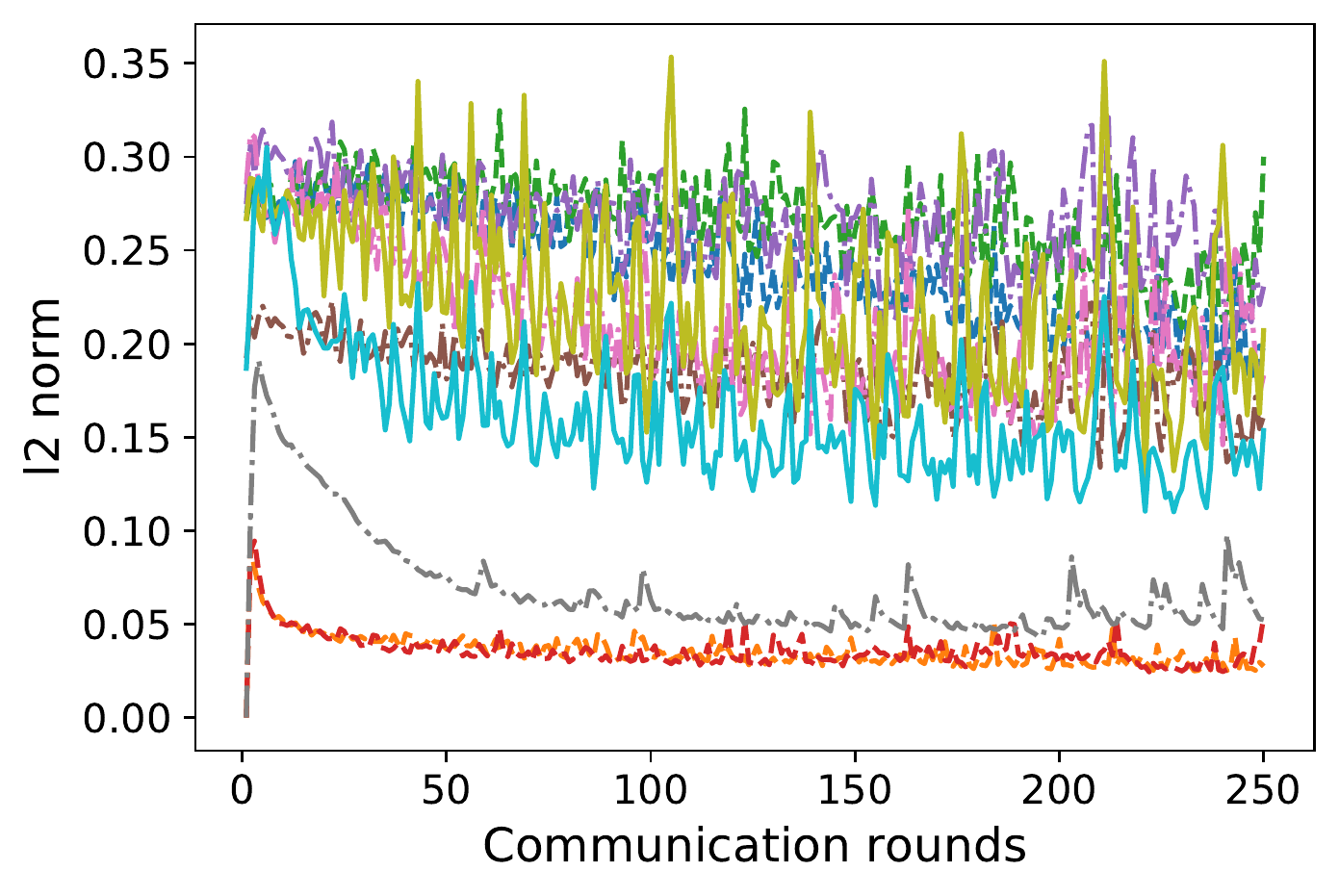}
        \caption{The $N$ and $E$ of convolutional layer calculated on the CIFAR10 dataset
        under non-IID - UB condition.}
        \label{fig:cifar10_norm_conv_niid-ub}       
      \end{minipage}
    \end{tabular}
\end{figure}

We plot the following $L_2$ norms in the figure.
\begin{eqnarray}
N&=&\norm*{\bm w_{t+1} - \bm w_t}\\
E&=&\sum_{k=1}^m\frac{n_k}{n}\norm*{\Delta \bm w^k_{t+1}}
\end{eqnarray}
The implications of the graphs on these norms can be summarized as follows:
\begin{description}
    \item[Improvement of convergence rate and accuracy]~\\ 
    The convergence of the proposed method is faster than that of the existing
    methods. The accuracy of the final result has also been improved. 
    
    \item[Trend of norm $N$]~\\
    The norm $N$ tends to be larger than that of the existing methods, 
    especially in
    the early stages of the communication round.
    This means that larger weight updates are achieved 
    in each round than those of the existing methods.
    
    \item[Trend of norm $E$]~\\ 
    On the other hand, the norm $E$ of the proposed method 
    tends to be smaller than the existing methods.
    This is due to the fact that clients are learning gradually 
    towards convergence.
    This trend is more pronounced in full connected layers, 
    but this is not the case in the convolutional layer.
    
    \item[Convolutional layer on CIFAR10]~\\ 
    Observing the norm of the convolutional layer in CIFAR10, the norm of the proposed method appears to be oscillating.
    It is believed that the weights are harder to converge
    than in full connected layers.
\end{description}

\end{document}